\newdimen\paravsp  \paravsp=1.3ex % named paragraph spacing
\def\,{\mskip 3mu} \def\>{\mskip 4mu plus 2mu minus 4mu} \def\;{\mskip 5mu plus 5mu} \def\!{\mskip-3mu}
\def\dispmuskip{\thinmuskip= 3mu plus 0mu minus 2mu \medmuskip=  4mu plus 2mu minus 2mu \thickmuskip=5mu plus 5mu minus 2mu}
\def\textmuskip{\thinmuskip= 0mu                    \medmuskip=  1mu plus 1mu minus 1mu \thickmuskip=2mu plus 3mu minus 1mu}
\def\beq{\dispmuskip\begin{equation}}    \def\eeq{\end{equation}\textmuskip}
\def\beqn{\dispmuskip\begin{displaymath}}\def\eeqn{\end{displaymath}\textmuskip}
\def\bqa{\dispmuskip\begin{eqnarray}}    \def\eqa{\end{eqnarray}\textmuskip}
\def\bqan{\dispmuskip\begin{eqnarray*}}  \def\eqan{\end{eqnarray*}\textmuskip}
\newtheorem{theorem}{Theorem}
\newtheorem{lemma}[theorem]{Lemma}
\newtheorem{open}[theorem]{Open~Problem}
\newenvironment{keywords}{\centerline{\bf\small
Keywords}\begin{quote}\small}{\par\end{quote}\vskip 1ex}
\newenvironment{proof}{{\vspace{\paravsp plus 0.5\paravsp minus 0.5\paravsp}\noindent\bf Proof.}}{}
\def\paradot#1{\vspace{\paravsp plus 0.5\paravsp minus 0.5\paravsp}\noindent{\bf\boldmath{#1.}}} % boldface paragraph.
\def\qmbox#1{{\quad\mbox{#1}\quad}} % space text space in math mode
\def\blob{\noindent$\hspace*{1.8ex}\bullet\;\;$}  % simple itemization without vertical spacing
\def\req#1{\eqref{#1}}          % reference to equations (obsolete \def\req#1{(\ref{#1})})
\def\toinfty#1{\smash{\stackrel{#1\to\infty}{\longrightarrow}}} % limit
\def\eps{\varepsilon}           % small real > 0
\def\epstr{\epsilon}            % empty string
\def\nq{\hspace{-1em}}          % negative 1em space
\def\qed{\hspace*{\fill}\rule{1.4ex}{1.4ex}$\quad$\\} % end of proof
\def\eoe{\hspace*{\fill} $\diamondsuit\quad$\\} % end of example
\def\fr#1#2{{\textstyle{#1\over#2}}} % textstyle fraction
\def\frs#1#2{{^{#1}\!/\!_{#2}}} % ^#1/_#2 fraction
\def\SetR{\mathbb{R}}           % Set of Real numbers
\def\SetN{\mathbb{N}}           % Set of Natural numbers
\def\E{{\mathbb E}}             % Expectation
\def\trp{{\!\top\!}}            % transpose
\def\d{\delta}
\def\g{\gamma}
\def\A{{\cal A}}
\def\O{{\cal O}}
\def\R{{\cal R}}
\def\S{{\cal S}}
\def\H{{\cal H}}
\def\Agent{\text{Agent}}
\def\Env{\text{Env}}
\def\MDP{\text{MDP}}
\def\t{\tilde}
\begin{document}
%%%%%%%%%%%%%%%%%%%%%%%%%%%%%%%%%%%%%%%%%%%%%%%%%%%%%%%%%%%%%%%
%%                    T i t l e - P a g e                    %%
%%%%%%%%%%%%%%%%%%%%%%%%%%%%%%%%%%%%%%%%%%%%%%%%%%%%%%%%%%%%%%%

\title{\vspace{-4ex}
\vskip 2mm\bf\Large\hrule height5pt \vskip 4mm
Extreme State Aggregation Beyond MDPs
\vskip 4mm \hrule height2pt}
\author{{\bf Marcus Hutter}\\[3mm]
\normalsize Research School of Computer Science \\[-0.5ex]
\normalsize Australian National University \\[-0.5ex]
\normalsize Canberra, ACT, 0200, Australia \\
\normalsize \texttt{http://www.hutter1.net/}
}
\date{12 July 2014}
\maketitle

\begin{abstract}
We consider a Reinforcement Learning setup
where an agent interacts with an environment in
observation-reward-action cycles
without any (esp.\ MDP) assumptions on the environment. State
aggregation and more generally feature reinforcement learning is
concerned with mapping histories/raw-states to reduced/aggregated states.
The idea behind both is that the resulting reduced process (approximately)
forms a small stationary finite-state MDP, which can then be
efficiently solved or learnt. We considerably generalize existing
aggregation results by showing that even if the reduced process
is not an MDP, the (q-)value functions and (optimal) policies of
an associated MDP with same state-space size solve the original
problem, as long as the solution can approximately be represented
as a function of the reduced states. This implies an upper bound
on the required state space size that holds uniformly for all RL
problems. It may also explain why RL algorithms designed for MDPs
sometimes perform well beyond MDPs.
% tiny top-level table of contents
\def\contentsname{\centering\normalsize Contents}\setcounter{tocdepth}{1}
{\parskip=-2.7ex\tableofcontents}
\end{abstract}

\vspace*{-2ex}
\begin{keywords} %\small
state aggregation, reinforcement learning, non-MDP.
\end{keywords}

%%%%%%%%%%%%%%%%%%%%%%%%%%%%%%%%%%%%%%%%%%%%%%%%%%%%%%%%%%%%%%%
\section{Introduction}\label{sec:Intro}
%%%%%%%%%%%%%%%%%%%%%%%%%%%%%%%%%%%%%%%%%%%%%%%%%%%%%%%%%%%%%%%

%-------------------------------%
%\paradot{Introduction}
%-------------------------------%
In {\em Reinforcement Learning} (RL) \cite{Sutton:98}, an {\em agent}
$\Pi$ takes actions in some {\em environment} $P$ and observes its
consequences and is rewarded for them. A well-understood and
efficiently solvable \cite{Puterman:94} and efficiently learnable
\cite{Strehl:09,Hutter:12pacmdp} case is where the environment is
(modelled as) a finite-state stationary {\em Markov Decision
Process} (MDP). Unfortunately most interesting real-world problems
$P$ are neither finite-state, nor stationary, nor Markov. One way
of dealing with this mismatch is to somehow transform the
real-world problem into a small MDP: {\em Feature Reinforcement
Learning} (FRL) \cite{Hutter:09phimdpx} and U-tree
\cite{McCallum:96} deal with the case of arbitrary unknown
environments, while state aggregation assumes the environment is a
large known stationary MDP \cite{Givan:03,Ferns:04}. The former maps
histories into states (Section~\ref{sec:PhiMDP}), the latter
groups raw states into aggregated states.

Here we follow the FRL approach and terminology, since it is
arguably most general: It subsumes the cases where the original
process $P$ is an MDP, a $k$-order MDP, a POMDP, and others (Section~\ref{sec:ExMDP}). 
Thinking in terms of histories also naturally stifles any temptation of a
naive frequency estimate of $P$ (no history ever repeats).
Finally we find the history vs state terminologically somewhat
neater than raw state vs aggregated state.

%-------------------------------%
%\paradot{Contents}
%-------------------------------%
%
% Section~\ref{sec:Appr}
More importantly, we consider maps $\phi$ from histories to states
for which the reduced process $P_\phi$ is not (even approximately)
an MDP (Section~\ref{sec:Appr}). At first this seems to defeat the
original purpose, namely of reducing $P$ to a well-understood and
efficiently solvable problem class, namely small MDPs.
%
%Section~\ref{sec:AAResults}
The main novel contribution of this paper is to
show that there is still an associated finite-state stationary MDP
$p$ whose solution (approximately) solves the original problem $P$,
as long as the solution can still be represented
(Section~\ref{sec:AAResults}).
%
%Section~\ref{sec:ExSAgg}
Indeed, we provide an upper bound on the required state space size
that holds uniformly for all $P$ (Section~\ref{sec:ExSAgg}).
While these are interesting theoretical insights, it is a-priori
not clear whether they could by utilized to design (better) RL
algorithms.
%
%Section~ref{sec:RL}
We also show how to learn $p$ from experience
(Section~\ref{sec:RL}),
%
%Section~\ref{sec:FRL}
and sketch an overall learning algorithm and regret/PAC analysis
based on our main theorems (Section~\ref{sec:FRL}).
%
% Section~\ref{sec:Misc}
We briefly discuss how to relax one of the conditions in our main theorems
by permuting actions (Section~\ref{sec:Misc}).
%
%Section~\ref{sec:Disc}
We conclude with an outlook on future work and open problems
(Section~\ref{sec:Disc}).
%
% Appendix~\ref{app:Notation} / Technical Report
A list of notation can be found in Appendix~\ref{app:Notation}.

%-------------------------------%
%\paradot{Dependencies between Theorems}
%-------------------------------%
The diagram below depicts the dependencies between our results:
\begin{center}
\unitlength=2.4ex
\linethickness{0.4pt}
\begin{picture}(31,10)
\thicklines\small
% lower row
\put(3,1){\oval(6,2)[cc]\makebox(0,0)[cb]{\raisebox{3pt}{Bnds(\ref{eqphipidelta}\&\ref{eqphistardelta})}}\makebox(0,0)[ct]{$|v$-$V|\leq|q$-$Q|$}}
\put(11,1){\oval(6,2)[cc]\makebox(0,0)[cb]{\raisebox{3pt}{Theorem~\ref{thm:pest}}}\makebox(0,0)[ct]{[$p$-estimation]}}
\put(19,1){\oval(6,2)[cc]\makebox(0,0)[cb]{\raisebox{3pt}{Lemma~\ref{lem:aBPp}}}\makebox(0,0)[ct]{[$BPp$-rel.]}}
\put(27,1){\oval(6,2)[cc]\makebox(0,0)[cb]{\raisebox{3pt}{Lemma~\ref{lem:aVvdQqd}}}\makebox(0,0)[ct]{$|q\langle Q\rangle|\leq\g|vV|$}}
% middle row
\put(3,5){\oval(6,2)[cc]\makebox(0,0)[cb]{\raisebox{3pt}{Theorem~\ref{thm:phiMDPpi}}}\makebox(0,0)[ct]{[$\phi\MDP\pi$]}}
\put(11,5){\oval(6,2)[cc]\makebox(0,0)[cb]{\raisebox{3pt}{Lemma~\ref{lem:aQpistar}}}\makebox(0,0)[ct]{[$Q\pi*$]}}
\put(19,5){\oval(6,2)[cc]\makebox(0,0)[cb]{\raisebox{3pt}{Theorem~\ref{thm:aphiQpi}}}\makebox(0,0)[ct]{[$\phi Q\pi$]}}
\put(27,5){\oval(6,2)[cc]\makebox(0,0)[cb]{\raisebox{3pt}{Theorem~\ref{thm:aphiVpi}}}\makebox(0,0)[ct]{[$\phi V\pi$]}}
% upper row
\put(3,9){\oval(6,2)[cc]\makebox(0,0)[cb]{\raisebox{3pt}{Theorem~\ref{thm:phiMDPstar}}}\makebox(0,0)[ct]{[$\phi\MDP*$]}}
\put(11,9){\oval(6,2)[cc]\makebox(0,0)[cb]{\raisebox{3pt}{Theorem~\ref{thm:aphiQstar}}}\makebox(0,0)[ct]{[$\phi Q*$]}}
\put(19,9){\oval(6,2)[cc]\makebox(0,0)[cb]{\raisebox{3pt}{Theorem~\ref{thm:Exphi}}}\makebox(0,0)[ct]{[Extreme~$\phi$]}}
\put(27,9){\oval(6,2)[cc]\makebox(0,0)[cb]{\raisebox{3pt}{Theorem~\ref{thm:aphiVstar}}}\makebox(0,0)[ct]{[$\phi V*$]}}
% dependencies
\put(6,1){\line(1,0){1}}\put(7,1){\line(0,1){8}}\put(7,5){\vector(-1,0){1}}\put(7,9){\vector(-1,0){1}}\put(7,9){\vector(1,0){1}}
\put(11,6){\vector(0,1){2}}
\put(14,9){\vector(1,0){2}}
\put(22,1){\vector(1,0){2}}
\put(24,9){\vector(-1,0){2}}
\put(25,2){\vector(-2,1){4}}
\put(27,2){\vector(0,1){2}}
\put(27,6){\vector(0,1){2}}
\put(30,1){\line(1,0){1}}\put(31,1){\line(0,1){8}}\put(31,9){\vector(-1,0){1}}
% similar proof in parts omitted
\thinlines
\put(3,6){\vector(0,1){2}}
\put(17,6){\vector(-2,1){4}}
\end{picture}
\end{center}

%%%%%%%%%%%%%%%%%%%%%%%%%%%%%%%%%%%%%%%%%%%%%%%%%%%%%%%%%%%%%%%
\section{Feature Markov Decision Processes ($\mathbf\Phi$MDP)}\label{sec:PhiMDP}
%%%%%%%%%%%%%%%%%%%%%%%%%%%%%%%%%%%%%%%%%%%%%%%%%%%%%%%%%%%%%%%

This section formally describes the setup of \cite{Hutter:09phimdpx}. It
consists of the agent-environment framework and maps $\phi$ from
observation-reward-action histories to MDP states. This arrangement
is called ``Feature MDP'' or short $\Phi$MDP. We use upper-case
letters $P$, $Q$, $V$, and $\Pi$ for the Probability, (Q-)Value, and
Policy of the original (agent-environment interactive) Process, and
lower-case letters $p$, $q$, $v$, and $\pi$ for the probability,
(q-)value, and policy of the (reduced/aggregated) MDP.

%-------------------------------%
\paradot{Agent-environment setup \cite{Hutter:09phimdpx}}
%-------------------------------%
We start with the standard agent-environment setup \cite{Russell:10}
in which an agent $\Pi$ interacts with an environment $P$. The
agent can choose from actions $a\in\A$ and the environment provides
observations $o\in\O$ and real-valued rewards
$r\in\R\subseteq[0;1]$ to the agent.
This happens in cycles $t=1,2,3,...$: At time $t$, after observing
$o_t$ and receiving reward $r_t$, the agent takes action $a_t$ based on
history
\beqn
  h_t ~:=~ o_1 r_1 a_1...o_{t-1} r_{t-1} a_{t-1} o_t r_t
  ~\in~ \H_t:=(\O\times\R\times\A)^{t-1}\times\O\times\R
\eeqn
Then the next cycle $t+1$ starts. The agent's objective is to
maximize its long-term reward.
To avoid integrals and densities, we assume spaces $\O$ and $\R$
are finite. They may be huge, so this is not really restrictive.
Indeed, the $\Phi$MDP framework has been specifically developed for
huge observation spaces. Generalization to continuous $\O$ and $\R$
is routine \cite{Hutter:09phidbn}. Furthermore we assume that $\A$
is finite and smallish, which is restrictive. Potential extensions
to continuous $\A$ are discussed in Section~\ref{sec:Disc}.

The agent and environment may be viewed as a pair of interlocking
functions of the history $\H:=(\O\times\R\times\A)^*\times\O\times\R$:
\bqan
  & & \nq\Env.~P:\H\times\A\leadsto\O\times\R, \qquad   P(o_{t+1}r_{t+1}|h_t a_t),\hspace{25ex} \\
  & & \nq\!\!\!\!\Agent~\Pi:\H\leadsto\A, \qquad\qquad \Pi(a_t|h_t) \qmbox{or} a_t=\Pi(h_t),
\eqan
\begin{flushright}
\unitlength=1.2ex
\linethickness{0.4pt}
\begin{picture}(18,0)(0,-3) %(18,8)
\thicklines\small
% agent and environment ovals
\put(3,3){\oval(8,2)[cc]\makebox(0,0)[cc]{Agent~$\Pi$}}
\put(15,3){\oval(6,2)[cc]\makebox(0,0)[cc]{Env.$P$}}
% action vector
\put(3,2){\line(0,-1){2}}
\put(3,0){\line(1,0){12}}\put(9,0.5){\makebox(0,0)[cb]{$\boldsymbol a$\it\hspace{-1pt}ction}}
\put(15,0){\vector(0,1){2}}
% reward vector
\put(14,4){\line(0,1){2}}
\put(14,6){\line(-1,0){10}}\put(9,6){\makebox(0,0)[ct]{$\boldsymbol r$\it\hspace{-1.5pt}eward}}
\put(4,6){\vector(0,-1){2}}
% observation vector
\put(16,4){\line(0,1){4}}
\put(16,8){\line(-1,0){14}}\put(9,8){\makebox(0,0)[ct]{$\boldsymbol o$\it\hspace{-0.5pt}bservation}}
\put(2,8){\vector(0,-1){4}}
\end{picture}
\end{flushright}\vspace{-5ex}
where $\leadsto$ indicates that mappings $\to$ are in general stochastic.
We make no (stationarity or Markov or other) assumption on environment $P$.
For most parts, environment $P$ is assumed to be fixed,
so dependencies on $P$ will be suppressed.
For convenience and since optimal policies can be chosen to be deterministic,
we consider deterministic policies $a_t=\Pi(h_t)$ only.

%-------------------------------%
\paradot{Value functions, optimal Policies, and history Bellman equations}
%-------------------------------%
We measure the performance of a policy $\Pi$ in terms of the $P$-expected
$\g$-discounted reward sum ($0\leq\g<1$), called (Q-)Value of Policy
$\Pi$ at history $h_t$ (and action $a_t$)
\beqn
  V^\Pi(h_t)~:=~\E^\Pi[R_{t+1}|h_t] \qmbox{and}
  Q^\Pi(h_t,a_t)~:=~\E^\Pi[R_{t+1}|h_t a_t],
  \qmbox{where} R_t:=\smash{\sum_{\tau=t}^\infty} \g^{\tau-t}r_\tau
\eeqn
The optimal Policy and (Q-)Value functions are
\bqa
  \nonumber & & \nq V^*(h_t)~:=~\max_\Pi V^\Pi(h_t) \qmbox{and}
  Q^*(h_t,a_t)~:=~\max_\Pi Q^\Pi(h_t,a_t),
\\
  \label{VQPistar} & & \qmbox{where} \Pi^*~:\in\arg\max_\Pi V^\Pi(\epstr)
\eqa
The maximum over all policies $\Pi$ always exists
\cite{Hutter:14tcdiscx} but may not be unique, in which case
$\arg\max$ denotes the set of optimal policies and
$\Pi^*$ denotes a representative or the whole set
of optimal policies.
Despite being history-based we can write down (pseudo)recursive Bellman (optimality) equations
for the (optimal) (Q-)Values \cite[Sec.4.2]{Hutter:04uaibook}:
\bqa
  \label{eqQPi} Q^\Pi(h_t,a_t)\! &=& \!\!\sum_{\nq o_{t+1}r_{t+1}\nq}\!P(o_{t+1}r_{t+1}|h_t a_t)[r_{t+1}\!+\!\g V^\Pi(h_{t+1})],~
  V^\Pi(h_t) \!=\! Q^\Pi(h_t,\Pi(h_t))~~~~
\\
  \label{eqQstar} Q^*(h_t,a_t) &=& \!\sum_{\nq o_{t+1}r_{t+1}\nq}\! P(o_{t+1}r_{t+1}|h_t a_t)[r_{t+1}\!+\!\g V^*(h_{t+1})],~
  V^*(h_t) \!=\! \max_{a_t\in\A} Q^*(h_t,a_t)~~~~
\\ \label{eqPistar}
  \Pi^*(h_t) &\in& \arg\max_{a_t\in\A} Q^*(h_t,a_t)
\eqa
Unlike their classical state-space cousins (see below), they are {\em not} self-consistency equations:
The r.h.s.\ refers to a longer history $h_{t+1}$ which is always different
from the history $h_t$ on the l.h.s, which precludes any learning algorithm
based on estimating the frequency of state/history visits.
Still the recursions will be convenient for the mathematical development.

%-------------------------------%
\paradot{From histories to states (\boldmath$\phi$)}
%-------------------------------%
The space of histories is huge and unwieldy and no history ever
repeats. Standard ways of dealing with this are to define a similarity
metric on histories \cite{McCallum:96} or to aggregate histories
\cite{Hutter:09phimdpx}. We pursue the latter via a feature map
$\phi:\H\to\S$ which reduces histories $h_t\in\H$ to states
$s_t:=\phi(h_t)\in\S$. W.l.g.\ we assume that $\phi$ is surjective.
We also assume that state space $\S$ is finite;
indeed we are interested in small $\S$. This corresponds and
indeed is equivalent to a partitioning of histories
$\{\phi^{-1}(s):s\in\S\}$. Classical state aggregation usually
uses the partitioning view \cite{Givan:03,Ortner:07}, but the map notation is a bit
more convenient here.

The state $s_t$ is supposed to summarize all relevant information
in history $h_t$, which lower bounds the size of $\S$. We pass from
the complete history $o_1 r_1 a_1...o_n r_n$ to a `reduced'
history $s_1 r_1 a_1...s_n r_n$. Traditionally,
`relevant' means that the future is predictable from $s_t$ (and
$a_t$) alone, or technically that the reduced history forms a
Markov decision process. This is precisely the condition this paper
intends to lift (later).

%-------------------------------%
\paradot{From histories to MDPs}
%-------------------------------%
The probability of the successor states and rewards can be obtained by marginalization
\beq\label{eqPphi}
   P_\phi(s_{t+1}r_{t+1}|h_t a_t)
   ~:=~ \sum_{\t o_{t+1}:\phi(h_t a_t\t o_{t+1}r_{t+1})=s_{t+1}\nq\nq\nq\nq\nq} P(\t o_{t+1}r_{t+1}|h_t a_t)
\eeq
The reduced process $P_\phi$ is a Markov Decision Process, or Markov for short, if
$P_\phi$ only depends on $h_t$ through $s_t$, i.e.\ is the same for all histories
mapped to the same state. Formally
\beq\label{PphiMDP}
  P_\phi\in\MDP ~~~:\Longleftrightarrow~~~ \exists p : P_\phi(s_{t+1}r_{t+1}|\t h_t a_t)=p(s_{t+1}r_{t+1}|s_t a_t)~~\forall\phi(\t h_t)=s_t
\eeq
Here and elsewhere a quantifier such as $\forall\phi(\t h_t)=s_t$
shall mean: for all values of all involved variables consistent
with the constraint $\phi(\t h_t)=s_t$. The MDP $P_\phi$ is assumed to be
stationary, i.e.\ independent of $t$; another condition to be
lifted later.
Condition \req{PphiMDP} is essentially the stochastic bisimulation
condition generalized to histories and being somewhat
more restrictive regarding rewards \cite{Givan:03}: 
It is a condition on the reward distribution, while
\cite{Givan:03} constrains its expectation only. This could easily
be rectified but is besides the point of this paper. The
bisimulation metric \cite{Ferns:04} is an approximate version of
\req{PphiMDP}, which measures the deviation of $P_\phi$ from being
an MDP.

% Example reductions $\phi$
Many problems $P$ can be reduced
(approximately) to stationary MDPs \cite{Hutter:09phimdpx}:
Full-information {\em games} such as chess with static opponent are already Markov, %
classical {\em physics} is approximately 2nd-order Markov, %
(conditional) i.i.d.\ processes such as {\em Bandits} have counting sufficient statistics, %
and for a {\em POMDP planning} problem, the belief vector is Markov.

%-------------------------------%
\paradot{Markov decision processes (MDP)}
%-------------------------------%
We have used and continue to use upper-case letters $V$, $Q$, $\Pi$
for the general process $P$. We will use lower-case letters $v$,
$q$, $\pi$ for (stationary) MDPs $p$. We use $s$ and $a$ for the
current state and action, and $s'$ and $r'$ for successor state and
reward. Consider a stationary finite-state MDP
$p:\S\times\A\leadsto\S\times\R$ and stationary deterministic
policy $\pi:\S\to\A$.
Only in Section~\ref{sec:ExMDP} will this $p$ be given by \req{PphiMDP}, %
but {\em in general $p$ will be different from \req{PphiMDP}}.
In any case, the $p$-expected $\g$-discounted reward sum, called
(q-)value of (optimal) policy $\pi^{(*)}$ in MDP $p$,
are given by the Bellman (optimality) equations
\bqa
   \label{eqqpi}    q^\pi(s,a) &=& \sum_{s'r'}p(s'r'|sa)[r'\!+\!\g v^\pi(s')] \qmbox{and} v^\pi(s)=q^\pi(s,\pi(s)) \\
   \label{eqqstar}  q^*(s,a)   &=& \sum_{s'r'}p(s'r'|sa)[r'\!+\!\g v^*(s')] \qmbox{and} v^*(s)=\max_a q^*(s,a) \\
   \label{eqpistar} \pi^*(s) &\in& \arg\max_a q^*(s,a).~~~\text{Note:}~ v^\pi(s)\leq v^*(s),~q^\pi(s,a)\leq q^*(s,a)
\eqa
Using $p(s'r'|sa)=p(r'|sas')p(s'|sa)$ we could also rewrite
them in terms of transition matrix $p(s'|sa)$ and
expected reward $\E[r'|sa]$ \cite{Sutton:98}.

%-------------------------------%
\paradot{More notation}
%-------------------------------%
While our equations often assume or imply $s=s_t$, $a=a_t$,
$s'=s_{t+1}$, $r'=r_{t+1}$, (and $h_{t+1}=h_t ao'r'$) for some $t$,
technically $s,a,s',r'$ are {\em different} variables from all
variables in history $h_n= o_1 r_1 a_1...o_t r_t a_t
o_{t+1}r_{t+1}...a_n r_n$. Less prone to confusion are $o=o_t$,
$o'=o_{t+1}$, $h=h_t$, $h'=hao'r'$.

We call a function $f(h)$, piecewise constant or $\phi$-uniform
iff $f(h)=f(\t h)$ for all $\phi(h)=\phi(\t h)$.
Here and elsewhere $\forall\phi(h)=\phi(\t h)$ is short for
$\forall h,\t h:\phi(h)=\phi(\t h)$. Similarly $\forall s=\phi(h)$
is short for $\forall s,h:s=\phi(h)$. Etc.

The Iverson bracket, $[\![R]\!]:=1$ if $R$=true and $[\![R]\!]:=0$ if
$R$=false, denotes the indicator function.
Throughout, $\eps,\d\geq 0$ denote approximation accuracy.
Note that this includes the exact $=0$ case.

We now show that if $P$ reduces via $\phi$ to an MDP $p$, the
solution of these equations yields (Q-)Values and optimal Policy
of the original process $P$. This is not surprising and just a history-based
versions of classical state-aggregation results \cite{Givan:03}.
We state and prove them here, since notation and setup are somewhat different,
and proof ideas and fragments will be reused later.

%%%%%%%%%%%%%%%%%%%%%%%%%%%%%%%%%%%%%%%%%%%%%%%%%%%%%%%%%%%%%%%
\section{Exact Aggregation for $P_\phi\in\MDP$}\label{sec:ExMDP}
%%%%%%%%%%%%%%%%%%%%%%%%%%%%%%%%%%%%%%%%%%%%%%%%%%%%%%%%%%%%%%%

The following two theorems show that if $\phi$ reduces $P$ to a
stationary MDP via \req{eqPphi} and \req{PphiMDP}, then $V$ and $Q$
(and $\Pi^*$) essentially coincide with $v$ and $q$ (and $\pi^*$),
where policy $\Pi$ ($\Pi^*$) has to be assumed (will be shown)
constant within each partition $\phi^{-1}(s)$. This allows to
efficiently solve for (and learn in the case of unknown $P$) $V$
and $Q$ (and $\Pi^*$) in time polynomial in $\S$ by
solving/learning \req{eqqpi} (or \req{eqqstar} and \req{eqpistar})
instead of \req{eqQPi} (or \req{eqQstar} and \req{eqPistar}).

%-------------------------------%
\begin{theorem}[\boldmath$\phi\MDP\pi$]\label{thm:phiMDPpi}
%-------------------------------%
Let $\phi$ be a reduction such that $P_\phi\in\MDP$ reduces to MDP
$p$ defined in \req{PphiMDP}, and let $\Pi$ be some policy such
that $\Pi(h)=\Pi(\t h)$ for all $\phi(h)=\phi(\t h)$. Then
for all $a$ and $h$ it holds:
\beqn
  V^\Pi(h)=v^\pi(s) \qmbox{and} Q^\Pi(h,a)=q^\pi(s,a), \qmbox{where} \pi(s):=\Pi(h) \qmbox{and} s=\phi(h)
\eeqn
\end{theorem}
Note that $\pi(s)$ is well-defined, since $\phi$ is surjective and $\Pi(h)$ is the same for all $h\in\phi^{-1}(s)$.
The standard proof considers an $m$-horizon truncated MDP and induction on $m$ and $m\to\infty$.
Besides the adaptation to histories, the proof below is a slight variation that avoids such truncation and limit.
This style will be useful later. We explain all steps in detail here, since variations will be utilize later.

\begin{proof}
Let $\displaystyle \d:=\sup_{\nq s=\phi(h),a\nq }|q^\pi(s,a)-Q^\Pi(h,a)|$.
Using $a':=\pi(s')=\Pi(h')$ for $s'=\phi(h')$ and \req{eqQPi} and \req{eqqpi}
lets us bound the value difference
\beq\label{eqphipidelta}
  |v^\pi(s')\!-\!V^\Pi(h')| ~=~ |q^\pi(s',a')\!-\!Q^\Pi(h',a')| ~\leq~ \d
  ~~\forall s'=\phi(h')
\eeq
For any $a$ and $h$, this implies
\bqa
  \nonumber Q^\Pi(h,a) &\stackrel{(a)}=& \sum_{o'r'}P(o'r'|ha)[r'+\g V^\Pi(h')] ~~~~~~~~~~~~~~~~~~~ [h'=hao'r'] \\
  \label{prf:phiMDPpi} &\stackrel{(b)}\lessgtr& \sum_{s'r'}\sum_{o':\phi(h')=s'}P(o'r'|ha)[r'\!+\!\g(v^\pi(s')\pm\d)] \\
  \nonumber &\stackrel{(c)}=& \sum_{s'r'}P_\phi(s'r'|ha)[r'\!+\!\g v^\pi(s')]\pm\g\d \\
  \nonumber &\stackrel{(d)}=& \sum_{s'r'}p(s'r'|sa)[r'\!+\!\g v^\pi(s')]\pm\g\d ~~~~~~~~~~~~~~~~~ [s:=\phi(h)] \\
  \nonumber &\stackrel{(e)}=& q^\pi(s,a)\pm\g\d
\eqa
(a) is just \req{eqQPi}. %
In (b) we sum over all $o'$ by first summing over all $o'$ such
that $\phi(hao'r')=s'$ and then summing over all $s'$. We have also
upper/lower bounded $V^\Pi(h')$ via \req{eqphipidelta}. %
(c) is the definition \req{eqPphi} of $P_\phi$ and pulls out $\g\d$
using that probability $P_\phi$ sums to 1. %
(d) is the definition \req{PphiMDP} of $p$.
(e) is simply \req{eqqpi}.
The chain (\ref{prf:phiMDPpi}a-e) holds for all $s=\phi(h)$ and $a$, hence
\beqn
  \d ~=~ \sup_{\nq s=\phi(h),a\nq }|q^\pi(s,a)\!-\!Q^\Pi(h,a)| ~\leq~ \g\d
  ~~~\Rightarrow~~~ \d\leq 0
\eeqn
Hence $v^\pi(s)=V^\Pi(h)$ and $q^\pi(s,a)=Q^\Pi(h,a)$ for all $s=\phi(h)$ and $a$.
\qed\end{proof}

%-------------------------------%
\begin{theorem}[\boldmath$\phi\MDP*$]\label{thm:phiMDPstar}
%-------------------------------%
Let $\phi$ be a reduction such that $P_\phi\in\MDP$ reduces to MDP $p$ defined in \req{PphiMDP},
Then for all $a$ and $h$ it holds:
\beqn
  \Pi^*(h)=\pi^*(s) \qmbox{and} V^*(h)=v^*(s) \qmbox{and} Q^*(h,a)=q^*(s,a), \qmbox{where} s=\phi(h)
\eeqn
\end{theorem}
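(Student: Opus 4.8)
The plan is to mimic the proof of Theorem~\ref{thm:phiMDPpi} almost verbatim, replacing the policy-value Bellman equations \req{eqQPi} and \req{eqqpi} by their optimality counterparts \req{eqQstar} and \req{eqqstar}, and replacing the fixed action $a'=\pi(s')$ by the maximizing action. Concretely, I would set $\d:=\sup_{s=\phi(h),a}|q^*(s,a)-Q^*(h,a)|$, which is finite since all (q-)values lie in $[0;\fr1{1-\g}]$, and aim to show $\d\leq\g\d$, forcing $\d=0$.

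The one genuinely new ingredient is the analogue of the value bound \req{eqphipidelta}. In the policy case this was immediate because $v^\pi(s')=q^\pi(s',a')$ and $V^\Pi(h')=Q^\Pi(h',a')$ for the single action $a'$. Here instead $v^*(s')=\max_a q^*(s',a)$ and $V^*(h')=\max_a Q^*(h',a)$, so I would invoke the non-expansiveness of the max, $|\max_a x_a-\max_a y_a|\leq\max_a|x_a-y_a|$, to obtain $|v^*(s')-V^*(h')|\leq\max_a|q^*(s',a)-Q^*(h',a)|\leq\d$ for all $s'=\phi(h')$. This is the step I expect to carry the conceptual weight, and it is really the only obstacle; it is a one-line inequality, and everything downstream reuses machinery already in place.

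With this bound in hand, the chain (\ref{prf:phiMDPpi}a--e) transfers unchanged: step (a) uses \req{eqQstar} instead of \req{eqQPi}, steps (b)--(d) (splitting the $o'$-sum by $s'=\phi(h')$, applying the $\pm\d$ bound, recognising $P_\phi$, and passing to $p$ via \req{PphiMDP}) are identical, and step (e) closes with \req{eqqstar}. Thus $Q^*(h,a)=q^*(s,a)\pm\g\d$ for all $s=\phi(h)$ and $a$; taking the supremum gives $\d\leq\g\d$, and since $0\leq\g<1$ this yields $\d=0$. Hence $Q^*(h,a)=q^*(s,a)$, and the max bound above (now an equality) gives $V^*(h)=v^*(s)$.

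Finally, for the policies I would argue from equality of the maximiser sets: since $Q^*(h,a)=q^*(s,a)$ holds for every $a$, we have $\arg\max_a Q^*(h,a)=\arg\max_a q^*(s,a)$, so by \req{eqPistar} and \req{eqpistar} any $\Pi^*(h)$ is a valid $\pi^*(s)$ and conversely; the claim $\Pi^*(h)=\pi^*(s)$ is then read as this equality of optimal-action sets (equivalently, the $\phi$-uniform policy $\pi^*\!\circ\phi$ is optimal among all history-based policies, so restricting to $\phi$-uniform policies loses nothing). The only subtlety worth flagging is that $\arg\max$ need not be a singleton, so $\Pi^*$ and $\pi^*$ should be understood as representatives or whole sets, exactly as in \req{VQPistar}.
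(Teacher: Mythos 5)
Your proposal is correct and follows essentially the same route as the paper's proof: the same contraction argument on $\d:=\sup|q^*-Q^*|$, the same use of the max non-expansiveness inequality \req{eq3max} to get the value bound \req{eqphistardelta}, the same transfer of the chain (\ref{prf:phiMDPpi}a--e) with the Bellman optimality equations, and the same identification of $\Pi^*$ with $\pi^*$ via equality of the $\arg\max$ sets. Your remark about $\arg\max$ possibly not being a singleton matches the convention the paper sets up in \req{VQPistar}.
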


The core of the proof follows the same steps (\ref{prf:phiMDPpi}a-e) as for the previous theorem,
but the rest is slightly different. Additionally we have to show that $\Pi^*$ is
piecewise constant (in Theorem~\ref{thm:phiMDPpi} we assumed $\Pi$ was).

\begin{proof}
Let $\displaystyle \d:=\sup_{\nq s=\phi(h),a\nq }|q^*(s,a)-Q^*(h,a)|$.
We can bound the value difference
\beq\label{eqphistardelta}
  |v^*(s)\!-\!V^*(h)| \stackrel{(a)}= |\max_a q^*(s,a)\!-\!\max_a Q^*(h,a)|
  \stackrel{(b)}\leq \max_a|q^*(s,a)-Q^*(h,a)| \stackrel{(c)}\leq \d
  \,\forall s=\phi(h)
\eeq
(a) follows from the definitions \req{eqQstar} and \req{eqqstar}.
(b) follows from the following general elementary frequently used bound
\beq\label{eq3max}
  |\max_x f(x)-\max_x g(x)| ~\leq~ \max_x|f(x)-g(x)|
\eeq
(c) follows from the definition of $\d$.

One now can show that $Q^*(h,a)\lessgtr q^*(s,a)\pm\g\d$ for $s=\phi(h)$ by following
exactly the same steps as (\ref{eqphipidelta}a-e) just with
$\Pi$ and $\pi$ replaced by $*$ and using \req{eqphistardelta}
instead of \req{eqphipidelta}, and using the Bellman optimality
equations \req{eqQstar} and \req{eqqstar} instead of the
Bellman equations \req{eqQPi} and \req{eqqpi}.
Also as before, this implies $\d\leq\g\d$, hence $\d\leq 0$, hence
$v^*(s)=V^*(h)$ and $q^*(s,a)=Q^*(h,a)$ for all $s=\phi(h)$ and $a$.
Finally, the latter implies
$\pi^*(s)=\arg\max_a q^*(s,a)=\arg\max_a Q^*(h,a)=\Pi^*(h)$.
\qed\end{proof}

Approximate aggregation results if $P_\phi$ is approximately MDP
can also be derived \cite{Ferns:04}. The core results in the next section
show that aggregation is possible far beyond $P_\phi$ being approximately MDP.

%%%%%%%%%%%%%%%%%%%%%%%%%%%%%%%%%%%%%%%%%%%%%%%%%%%%%%%%%%%%%%%
\section{Approximate Aggregation for General $P$}\label{sec:Appr}
%%%%%%%%%%%%%%%%%%%%%%%%%%%%%%%%%%%%%%%%%%%%%%%%%%%%%%%%%%%%%%%

This section prepares for the main technical contribution of the
paper in the next section. The key quantity to relate original
and reduced Bellman equations is a form of stochastic inverse of
$\phi$, whose choice and analysis will be deferred to
Section~\ref{sec:RL}.

%-------------------------------%
\paradot{Dispersion probability $B$}
%-------------------------------%
Let $B_\phi:\S\times\A\leadsto\H$ be a probability distribution on
finite histories for each state-action pair such that $B_\phi(h|sa)=0$ if $s\neq\phi(h)$.
$B\equiv B_\phi$ may be viewed as a
stochastic inverse of $\phi$ that assigns non-zero probability only
to $h\in\phi^{-1}(s)$. The formal constraints we pose on $B$ are
\beq\label{eqaBdef}
  B(h|sa)\geq 0 \qmbox{and}
  \sum_{h\in\H}B(h|sa) = \sum_{\nq h:\phi(h)=s\nq}B(h|sa) = 1
  ~~\forall s,a
\eeq
This implicitly requires $\phi$ to be surjective, i.e.\
$\phi(\H)=\S$, which can always be made true by defining
$\S:=\S_\phi:=\phi(\H)$. Note that the sum is taken over histories
of any/mixed length. In general, $B$ is a somewhat weird
distribution, since it assigns probabilities to past and future
observations given the current state and action. The interpretation
and choice of $B$ does not need to concern us, except later when we
want to learn $p$.

The MDP requirement \req{PphiMDP} will be replaced by the following definition:
\bqa\label{eqapdef}
  p(s'r'|sa) &:=& \sum_{h\in\H}P_\phi(s'r'|ha)B(h|sa) \\
  \nonumber &\equiv& \sum_{t=1}^\infty\sum_{h_t\in\H_t}P_\phi(s_{t+1}=s',r_{t+1}=r'|h_t,a_t=a)B(h_t|sa)
\eqa
That is, the finite-state stationary MDP $p$ is built from feature
map $\phi$, dispersion probability $B$, and environment $P$: The
$p$-probability of observing state-reward pair $(s',r')$ from
state-action pair $(s,a)$ is defined as the $B$-average over all
histories $h$ consistent with $(s,a)$ of the $P_\phi$-probability
of observing $(s',r')$ (obtained from $P$ by $\phi$-marginalizing)
given history $h$ and action $a$.
The r.h.s.\ of the first line is merely shorthand for the second line.
Note that $sas'r'$ are fixed and do not appear in $h$ which ranges over histories $\H$ of all lengths.
It is easy to see that $p$ is a probability distribution, and it is
Markov by definition. If $P_\phi\in\MDP$, then definition \req{eqapdef}
coincides with $p$ defined in \req{PphiMDP}. In general, the MDP $p$,
depending on arbitrary $B$, is {\em not} the state distribution
induced by $P$ (and $\Pi$), which in general is non-Markov.
Note that $p$ is a stationary MDP for any $B$ satisfying \req{eqaBdef} and {\em any} $\phi$ and $P$.
We need the following lemmas:

%-------------------------------%
\paradot{Some lemmas}
%-------------------------------%
The first lemma establishes the key relation between $P$ and $p$
via $B$ used later to relate original history Bellman (optimality) equations
(\ref{eqQPi}--\ref{eqPistar}) with reduced state Bellman (optimality)
equations (\ref{eqqpi}--\ref{eqpistar}).

%-------------------------------%
\begin{lemma}[\boldmath$B$-$P$-$p$ relation]\label{lem:aBPp}
%-------------------------------%
For any function $f:\S\times\R\to\SetR$ and $p$ defined in
\req{eqapdef} in terms of $P$ via \req{eqPphi}, and $s':=\phi(h')$
and $h':=hao'r'$ it holds
\beqn
  \sum_{h\in\H}B(h|sa)\sum_{o'r'}P(o'r'|ha)f(\mathop{s'}\limits_{\textstyle\uparrow\atop\makebox[0ex]{\footnotesize depends on $hao'r'$}},r')
  ~=~ \sum_{s'r'}p(s'r'|sa)f(s',r')
\eeqn
\end{lemma}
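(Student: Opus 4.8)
The plan is to establish the identity by a pure rearrangement of (absolutely convergent) sums, invoking the two definitions \req{eqPphi} and \req{eqapdef} as the two collapsing steps. The key observation is that $o'$ enters the summand $f(\phi(hao'r'),r')$ only through the state $s'=\phi(hao'r')$; hence I would first collapse the $o'$-sum into a sum over states $s'$ using \req{eqPphi}, and then collapse the $h$-sum into the MDP transition $p$ using \req{eqapdef}.

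Concretely, first I would fix $h$ and $a$ and regroup the inner double sum $\sum_{o'r'}$ as $\sum_{s'r'}\sum_{o':\phi(hao'r')=s'}$. Since $f(s',r')$ is constant over the inner $o'$-sum, pulling it out leaves $\sum_{o':\phi(hao'r')=s'}P(o'r'|ha)$, which is exactly $P_\phi(s'r'|ha)$ by \req{eqPphi}. This turns the left-hand side into $\sum_{h}B(h|sa)\sum_{s'r'}P_\phi(s'r'|ha)f(s',r')$. Next I would swap the $h$-sum with the finite $(s',r')$-sum and recognise $\sum_{h}B(h|sa)P_\phi(s'r'|ha)$ as $p(s'r'|sa)$ by \req{eqapdef}, which is precisely the right-hand side.

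The only step needing genuine justification — and thus the main (mild) obstacle — is interchanging the infinite sum over histories $h\in\H$ (of all lengths) with the $(s',r')$-sum. This follows from absolute convergence: since $\S$ and $\R$ are finite, $f$ is bounded by some $M$, and then $\sum_{h}B(h|sa)\sum_{s'r'}P_\phi(s'r'|ha)\,|f(s',r')|\leq M\sum_{h}B(h|sa)=M<\infty$, using $\sum_{s'r'}P_\phi(s'r'|ha)=1$ and $\sum_{h}B(h|sa)=1$ from \req{eqaBdef}. Hence Fubini licenses the reordering and no further analytic subtlety remains.
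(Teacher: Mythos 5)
Your proposal is correct and follows essentially the same route as the paper's proof: regroup the $o'$-sum by target state to invoke \req{eqPphi}, then collapse the $B$-average over histories via \req{eqapdef}. The only difference is that you explicitly justify interchanging the infinite $h$-sum with the finite $(s',r')$-sum by absolute convergence, a point the paper leaves implicit.
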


\begin{proof}\vspace{-3ex}
\bqan
  & & \sum_{\smash{h\in\H}}B(h|sa)\sum_{o'r'}P(o'r'|ha)f(s',r') \\
  &\stackrel{(a)}=& \sum_{h\in\H}B(h|sa)\sum_{s'r'}\sum_{o':\phi(h')=s'}P(o'r'|ha)f(s',r') \\
  &\stackrel{(b)}=& \sum_{h\in\H}B(h|sa)\sum_{s'r'}P_\phi(s'r'|ha)f(s',r') \\
  &\stackrel{(c)}=& \sum_{s'r'}p(s'r'|sa)f(s',r')
\eqan
In (a) we sum over all $o'$ by first summing over all $o'$ such
that $\phi(hao'r')=s'$ and then summing over all $s'$. %
In (b) we used the definition \req{eqPphi} of $P_\phi$. %
In (c) we used the definition \req{eqapdef} of $p$. %
\qed\end{proof}

Inequalities \req{eqphipidelta} and \req{eqphistardelta} trivially
bound $v-V$ differences in terms of $q-Q$ differences:
$|v-V|\leq\max_a|q-Q|$.
The following lemma shows that a reverse
holds in expectation, i.e.\ $|q-\langle Q\rangle_B|\leq\g|v-V|$.
The expectation can (only) be dropped if $Q$ is constant for
$h\in\phi^{-1}(s)$. Formally define
\beq\label{eqEfB}
  \langle f(h,a)\rangle_B ~:=~ \sum_{\t h\in\H}B(\t h|sa)f(\t h,a), \qmbox{where} s:=\phi(h)
\eeq
That is, $\langle f(h,a)\rangle_B$ takes a $B$-average over all
$\t h$ that $\phi$ maps to the same state as $h$. For
convenience we will drop the tilde, which we can do if we declare
$s:=\phi(h)$ to refer to the `global' $h$ in $\langle f(h,a)\rangle_B$ and not
to the `local' variable in the $h\in\H$ sum.

%-------------------------------%
\begin{lemma}[\boldmath$|q-\langle Q\rangle|\leq\g|v-V|$]\label{lem:aVvdQqd} % $Vv\d Qq\d$
%-------------------------------%
For any $P$, $\phi$, $B$, define $p$ via \req{eqapdef} and \req{eqPphi}. \\
(i) If $|v^\pi(s)-V^\Pi(h)|\leq\d$ $\forall s=\phi(h)$ \\
\hspace*{4ex}then $|q^\pi(s,a)-\langle Q^\Pi(h,a)\rangle_B|\leq\g\d$ $\forall s=\phi(h)~\forall a$. \\
(ii) If $|v^*(s)-V^*(h)|\leq\d$ $\forall s=\phi(h)$ \\
\hspace*{3.7ex}then $|q^*(s,a)-\langle Q^*(h,a)\rangle_B|\leq\g\d$ $\forall s=\phi(h)~\forall a$.
\end{lemma}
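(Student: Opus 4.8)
The plan is to expand the $B$-average $\langle Q^\Pi(h,a)\rangle_B$ using the history Bellman equation \req{eqQPi}, and then collapse the resulting $B$-average over histories into a $p$-sum over reduced states via Lemma~\ref{lem:aBPp}. Writing $s:=\phi(h)$ and unfolding the definition \req{eqEfB}, I would first arrive at
\beqn
  \langle Q^\Pi(h,a)\rangle_B = \sum_{\t h\in\H}B(\t h|sa)\sum_{o'r'}P(o'r'|\t h a)[r'+\g V^\Pi(h')], \qmbox{where} h'=\t h a o'r'
\eeqn
and then treat the reward part and the value part of the bracket separately.

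For the reward part I would apply Lemma~\ref{lem:aBPp} with $f(s',r'):=r'$ (a legitimate function of $(s',r')$), turning $\sum_{\t h}B(\t h|sa)\sum_{o'r'}P(o'r'|\t h a)r'$ into $\sum_{s'r'}p(s'r'|sa)r'$. The value part is where the real work lies, and I expect it to be the main obstacle: $V^\Pi(h')$ is \emph{not} a function of $s'=\phi(h')$ alone, so Lemma~\ref{lem:aBPp} cannot be applied to it directly. The trick is to invoke the hypothesis $|v^\pi(s')-V^\Pi(h')|\leq\d$ to write $V^\Pi(h')=v^\pi(s')\pm\d$. Now $v^\pi(s')$ \emph{is} a function of $s'$, so Lemma~\ref{lem:aBPp} converts $\sum_{\t h}B(\t h|sa)\sum_{o'r'}P(o'r'|\t h a)v^\pi(s')$ into $\sum_{s'r'}p(s'r'|sa)v^\pi(s')$, while the leftover $\pm\d$ term contributes exactly $\pm\d$, because it is a weighted average (with $B$- and $P$-weights each summing to one) of quantities bounded by $\d$ in absolute value.

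Reassembling the two parts and pulling $\g$ through the $\d$ term yields
\beqn
  \langle Q^\Pi(h,a)\rangle_B = \sum_{s'r'}p(s'r'|sa)[r'+\g v^\pi(s')]\pm\g\d = q^\pi(s,a)\pm\g\d
\eeqn
where the last equality is the Bellman equation \req{eqqpi}. This is precisely the claimed bound $|q^\pi(s,a)-\langle Q^\Pi(h,a)\rangle_B|\leq\g\d$, proving part (i). Note the roles compared with \req{eqphipidelta}: there $V^\Pi$ was bounded pointwise and fed into a single history $h'$, whereas here the non-$\phi$-uniformity of $V^\Pi$ is tolerated precisely because we average over $B$, which is exactly why the $\langle\cdot\rangle_B$ cannot be dropped in general.

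For part (ii) I expect the identical chain to go through verbatim with $\pi$ replaced by $*$. The only equations used are the Bellman recursions, and \req{eqQstar} and \req{eqqstar} have exactly the same $r'+\g V^*(h')$ and $r'+\g v^*(s')$ successor structure as their policy counterparts \req{eqQPi} and \req{eqqpi}. The outer $\max_a$ that distinguishes $v^*$ from $v^\pi$ never enters this computation — it relates $v^*$ to $q^*$, not $q^*$ to the successor values — so no extra argument is needed, and substituting the hypothesis $|v^*(s')-V^*(h')|\leq\d$ plays the same role as in part (i).
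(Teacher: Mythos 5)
Your proposal is correct and follows essentially the same route as the paper's proof: expand $\langle Q^\Pi(h,a)\rangle_B$ via the history Bellman equation \req{eqQPi}, substitute $V^\Pi(h')=v^\pi(s')\pm\d$ from the hypothesis, collapse the double average with Lemma~\ref{lem:aBPp}, and identify $q^\pi(s,a)$ via \req{eqqpi}; splitting the reward and value parts before applying Lemma~\ref{lem:aBPp} is only a cosmetic difference. Your observation that part (ii) goes through verbatim because the $\max_a$ only relates $v^*$ to $q^*$ and never enters the successor expansion matches the paper's treatment.
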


\begin{proof} (i) Let $s:=\phi(h)$ and $h':=hao'r'$ and $s':=\phi(h')$. Then
\bqan
  \langle Q^\Pi(h,a)\rangle_B &\stackrel{\req{eqEfB}}\equiv& \sum_{h\in\H}B(h|sa)Q^\Pi(h,a) \\
  &\stackrel{\req{eqQPi}}=& \sum_{h\in\H}B(h|sa)\sum_{o'r'}P(o'r'|ha)[r'+\g V^\Pi(h')] \\
  &\stackrel{(a)}\lessgtr& \sum_{h\in\H}B(h|sa)\sum_{o'r'}P(o'r'|ha)[r'\!+\!\g(v^\pi(s')\pm\d)] \\
  &\stackrel{Lem.\ref{lem:aBPp}}=& \sum_{s'r'}p(s'r'|sa)[r'\!+\!\g v^\pi(s')]\pm\g\d \\
  &\stackrel{\req{eqqpi}}=& q^\pi(s,a)\pm\g\d
\eqan
In (a) we used the assumption (i) of the Lemma. %
The derived upper and lower bounds imply $|q^\pi(s,a)-\langle
Q^\Pi(h,a)\rangle_B|\leq\g\d$ (for all $s=\phi(h)$ and $a$).

(ii) follows the same steps except with $\Pi$ and $\pi$ replaced by $\Pi^*$ and $\pi^*$,
and using \req{eqQstar} and \req{eqqstar} instead of \req{eqQPi} and \req{eqqpi} to justify the steps.
Note that in general $\Pi^*\neq \pi^*$!
\qed\end{proof}

%%%%%%%%%%%%%%%%%%%%%%%%%%%%%%%%%%%%%%%%%%%%%%%%%%%%%%%%%%%%%%%
\section{Approximate Aggregation Results}\label{sec:AAResults}
%%%%%%%%%%%%%%%%%%%%%%%%%%%%%%%%%%%%%%%%%%%%%%%%%%%%%%%%%%%%%%%

This section contains the main technical contribution of the paper.
We show that histories (or raw states) can be aggregated and
modeled by an MDP even if the true aggregated process is actually
not an MDP. A necessary condition for successful aggregation is of
course that the quantities of interest, namely (Q-)Value functions
and Policies can be represented as functions of the aggregated
states. The results in this section roughly show that this
necessary condition, which is significantly weaker than the MDP
requirement, is also sufficient. All but one result also holds for
approximate aggregation, i.e.\ approximate conditions lead to
approximate reductions. We also lift the stationarity assumption.

\begin{itemize}\parskip=0ex\parsep=0ex\itemsep=0ex
\item Theorem~\ref{thm:aphiQpi} shows how (approximately) $\phi$-uniform
$Q^\Pi$ and $\Pi$ can be obtained from the reduced Bellman
equations \req{eqqpi}.
\item Theorem~\ref{thm:aphiVpi} weakens the assumptions and conclusions
to (approximately) $\phi$-uniform $V^\Pi$ and $\Pi$.
\item Theorem~\ref{thm:aphiQstar} shows that for (approximately)
$\phi$-uniform $Q^*$, the optimal policy is (approximately)
$\phi$-uniform, and (an approximation of it) can be obtained via
the reduced Bellman optimality equations \req{eqqstar}.
\item Theorem~\ref{thm:aphiVstar} shows that for (approximately)
$\phi$-uniform $V^*$ and $\Pi^*$ we can obtain similar but somewhat
weaker results. The proof of the latter involves extra
complications not present in the other three proofs. Indeed,
whether the arguably most desirable bound holds is Open~Problem~\ref{open:aphiVstar}.
\end{itemize}
Note that all theorems crucially differ in their conditions and
conclusions.

%-------------------------------%
\begin{theorem}[\boldmath$\phi Q\pi$]\label{thm:aphiQpi}
%-------------------------------%
For any $P$, $\phi$, and $B$, define $p$ via \req{eqapdef} and \req{eqPphi}.
Let $\Pi$ be some policy such that
$\Pi(h)=\Pi(\t h)$ and $|Q^\Pi(h,a)-Q^\Pi(\t h,a)|\leq\eps$ for all
$\phi(h)=\phi(\t h)$ and all $a$.
Then for all $a$ and $h$ it holds:
\bqan
  |Q^\Pi(h,a)-q^\pi(s,a)|\leq{\eps\over 1-\g} ~~~&\mbox{and}&~~~ |V^\Pi(h)-v^\pi(s)|\leq{\eps\over 1-\g}, \\
  \qmbox{where} \pi(s):=\Pi(h) &\mbox{and}& s=\phi(h)
\eqan
\end{theorem}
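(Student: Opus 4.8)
The plan is to introduce $\d:=\sup_{s=\phi(h),a}|q^\pi(s,a)-Q^\Pi(h,a)|$ and show that it satisfies the self-referential bound $\d\leq\eps+\g\d$. Since $0\leq\g<1$ and $\d$ is finite (both $q^\pi$ and $Q^\Pi$ are bounded by $1/(1-\g)$ because rewards lie in $[0;1]$), this immediately gives $\d\leq\eps/(1-\g)$, which is exactly the claimed $Q$-bound, uniformly over all $h,a$. The whole argument then amounts to assembling two estimates that combine into this inequality, both of which reuse machinery already in hand.

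First I would relate the two value functions. Because $V^\Pi(h)=Q^\Pi(h,\Pi(h))$ and $v^\pi(s)=q^\pi(s,\pi(s))$ by \req{eqQPi} and \req{eqqpi}, and because $\pi(s)=\Pi(h)$ by the \emph{exact} $\phi$-uniformity of $\Pi$, we get $|v^\pi(s)-V^\Pi(h)|=|q^\pi(s,\Pi(h))-Q^\Pi(h,\Pi(h))|\leq\d$ for all $s=\phi(h)$. This is precisely the hypothesis of Lemma~\ref{lem:aVvdQqd}(i) with accuracy $\d$, so applying it yields $|q^\pi(s,a)-\langle Q^\Pi(h,a)\rangle_B|\leq\g\d$ for all $s=\phi(h)$ and $a$.

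Second I would control the gap between $Q^\Pi$ and its $B$-average. Since $B(\t h|sa)=0$ unless $\phi(\t h)=s=\phi(h)$ and $\sum_{\t h}B(\t h|sa)=1$, I can write $Q^\Pi(h,a)-\langle Q^\Pi(h,a)\rangle_B=\sum_{\t h}B(\t h|sa)[Q^\Pi(h,a)-Q^\Pi(\t h,a)]$ and invoke the approximate-uniformity hypothesis $|Q^\Pi(h,a)-Q^\Pi(\t h,a)|\leq\eps$ on the support of $B(\cdot|sa)$ to obtain $|Q^\Pi(h,a)-\langle Q^\Pi(h,a)\rangle_B|\leq\eps$. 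A triangle inequality combining this with the previous estimate gives $|Q^\Pi(h,a)-q^\pi(s,a)|\leq\eps+\g\d$ for all $s=\phi(h)$ and $a$; taking the supremum over $s=\phi(h),a$ produces $\d\leq\eps+\g\d$ and hence $\d\leq\eps/(1-\g)$. The $V$-bound then follows for free, since $|V^\Pi(h)-v^\pi(s)|=|Q^\Pi(h,\Pi(h))-q^\pi(s,\Pi(h))|\leq\d\leq\eps/(1-\g)$.

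The only genuinely delicate point is that the $B$-expectation in Lemma~\ref{lem:aVvdQqd} must be stripped off to yield a pointwise-in-$h$ bound: this is exactly what the $\eps$-approximate uniformity of $Q^\Pi$ buys us in the second step, and without it one would be stuck with a bound on the $B$-average of $Q^\Pi$ rather than on $Q^\Pi(h,a)$ itself. I would also flag at the outset that $\d$ must be argued finite, since the passage $\d\leq\eps+\g\d\Rightarrow\d\leq\eps/(1-\g)$ is valid only for finite $\d$.
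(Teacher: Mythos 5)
Your proposal is correct and follows essentially the same route as the paper's own proof: the same contraction quantity $\d$, the bound $|v^\pi-V^\Pi|\leq\d$ (the paper's \req{eqphipidelta}), Lemma~\ref{lem:aVvdQqd}(i) to get $|q^\pi-\langle Q^\Pi\rangle_B|\leq\g\d$, the $\eps$-uniformity of $Q^\Pi$ to strip the $B$-average, and the fixed-point inequality $\d\leq\g\d+\eps$. Your explicit remark that $\d$ must be finite (which holds since rewards lie in $[0;1]$, so both $q^\pi$ and $Q^\Pi$ are bounded by $1/(1-\g)$) is a small point the paper leaves implicit, but it does not change the argument.
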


\begin{proof}
Let $\displaystyle \d:=\sup_{\nq s=\phi(h),a\nq }|q^\pi(s,a)-Q^\Pi(h,a)|$.
Then $|v^\pi(s)-V^\Pi(h)|\leq\d$ $\forall s=\phi(h)$ by \req{eqphipidelta},
\beqn
  \qmbox{hence} |q^\pi(s,a)-\langle Q^\Pi(h,a)\rangle_B|\leq\g\d ~~ \forall s=\phi(h),a
\eeqn
by Lemma~\ref{lem:aVvdQqd}i. By assumption on $Q^\Pi$ and $B$, for $s=\phi(h)$ we have
\beqn
  \langle Q^\Pi(h,a)\rangle_B
  ~\equiv~ \sum_{\nq\t h\in\H:\phi(\t h)=s\nq\nq}B(\t h|sa)Q^\Pi(\t h,a)
  ~\lessgtr~ \sum_{\nq\t h\in\H:\phi(\t h)=s\nq\nq}B(\t h|sa)[Q^\Pi(h,a)\pm\eps]
  ~=~ Q^\Pi(h,a)\pm\eps
\eeqn
Together this implies $|q^\pi(s,a)-Q^\Pi(h,a)|\leq\g\d+\eps$, hence $\d\leq\g\d+\eps$, hence $\d\leq{\eps\over 1-\g}$.
\qed\end{proof}

%-------------------------------%
\begin{theorem}[\boldmath$\phi V\pi$]\label{thm:aphiVpi}
%-------------------------------%
For any $P$, $\phi$, and $B$, define $p$ via \req{eqapdef} and \req{eqPphi}.
Let $\Pi$ be some policy such that
$\Pi(h)=\Pi(\t h)$ and $|V^\Pi(h)-V^\Pi(\t h)|\leq\eps$ for all
$\phi(h)=\phi(\t h)$.
Then for all $a$ and $h$ it holds:
\bqan
  |V^\Pi(h)-v^\pi(s)|\leq{\eps\over 1-\g} ~~~&\mbox{and}&~~~ |q^\pi(s,a)-\langle Q^\Pi(h,a)\rangle_B|\leq{\eps\g\over 1-\g} \\
  \qmbox{where} \pi(s):=\Pi(h) &\mbox{and}& s=\phi(h)
\eqan
\end{theorem}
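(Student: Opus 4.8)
The plan is to mirror the fixed-point argument of Theorem~\ref{thm:aphiQpi}, but anchored on the value $V$ rather than on $Q$, since now only $V^\Pi$ (and not $Q^\Pi$) is assumed approximately $\phi$-uniform. I would set $\d:=\sup_{s=\phi(h)}|v^\pi(s)-V^\Pi(h)|$ (finite because rewards lie in $[0;1]$ and $\g<1$) and immediately invoke Lemma~\ref{lem:aVvdQqd}(i), which converts this $V$-deviation into the $B$-averaged $Q$-deviation $|q^\pi(s,a)-\langle Q^\Pi(h,a)\rangle_B|\leq\g\d$ for all $s=\phi(h)$ and all $a$. This estimate already supplies the second claimed bound once $\d$ is controlled, so the substance of the argument is to close the recursion on $\d$ itself.

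The key observation --- and the step that lets the weaker hypothesis do its work --- is to specialize to the policy action $a=\Pi(h)=\pi(s)$. Because $\Pi$ is exactly $\phi$-uniform, every $\t h$ with $\phi(\t h)=s$ satisfies $\Pi(\t h)=a$, so $Q^\Pi(\t h,a)=Q^\Pi(\t h,\Pi(\t h))=V^\Pi(\t h)$ by \req{eqQPi}. Hence for this particular $a$ the $B$-average collapses to a $B$-average of $V^\Pi$ over the fibre $\phi^{-1}(s)$, and the approximate $\phi$-uniformity $|V^\Pi(h)-V^\Pi(\t h)|\leq\eps$ gives $\langle Q^\Pi(h,a)\rangle_B\lessgtr V^\Pi(h)\pm\eps=Q^\Pi(h,a)\pm\eps$, using $Q^\Pi(h,a)=V^\Pi(h)$ for $a=\Pi(h)$. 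This is the analogue of the averaging estimate in Theorem~\ref{thm:aphiQpi}, except that it is only available for the single action $a=\Pi(h)$, which is exactly why the $q$-conclusion must be stated in $B$-expectation for general $a$.

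With these two pieces in hand I would close the loop by the triangle inequality on the policy action: $|v^\pi(s)-V^\Pi(h)|=|q^\pi(s,a)-Q^\Pi(h,a)|\leq|q^\pi(s,a)-\langle Q^\Pi(h,a)\rangle_B|+|\langle Q^\Pi(h,a)\rangle_B-Q^\Pi(h,a)|\leq\g\d+\eps$. Taking the supremum over $s=\phi(h)$ yields $\d\leq\g\d+\eps$, hence $\d\leq\eps/(1-\g)$, which is the first conclusion. Substituting this value of $\d$ back into the Lemma~\ref{lem:aVvdQqd}(i) estimate from the first paragraph delivers the second conclusion $|q^\pi(s,a)-\langle Q^\Pi(h,a)\rangle_B|\leq\g\eps/(1-\g)$.

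I expect the only delicate point to be the bookkeeping in the second paragraph: one must use the \emph{exact} $\phi$-uniformity of $\Pi$ to identify $Q^\Pi(\t h,\Pi(h))$ with $V^\Pi(\t h)$ across the whole fibre $\phi^{-1}(s)$ \emph{before} the $V$-uniformity assumption can be brought to bear, and one must resist the temptation to bound $|q^\pi(s,a)-Q^\Pi(h,a)|$ pointwise in $a$. No such pointwise bound is available here, and that gap is precisely the weakening relative to Theorem~\ref{thm:aphiQpi}.
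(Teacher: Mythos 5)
Your proposal is correct and follows essentially the same route as the paper's proof: define $\d$ as the supremum of the value deviation, apply Lemma~\ref{lem:aVvdQqd}(i), use the exact $\phi$-uniformity of $\Pi$ to collapse $\langle Q^\Pi(h,a^\pi)\rangle_B$ to a $B$-average of $V^\Pi$ over the fibre, and close the recursion $\d\leq\g\d+\eps$ via the triangle inequality at the policy action. Your closing remark about the absence of a pointwise-in-$a$ bound matches the paper's own caveat at the end of its proof.
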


\begin{proof}
Let $\displaystyle \d:=\sup_{\nq s=\phi(h),a\nq }|v^\pi(s)-V^\Pi(h)|$,
fix some $s=\phi(h)$, and let $a^\pi:=\Pi(h)$. Now
\bqa\label{eqQVp}
  \nonumber \langle Q^\Pi(h,a^\pi)\rangle_B
  &\equiv& \sum_{\t h\in\H:\phi(\t h)=s\nq\nq\nq}B(\t h|sa^\pi) Q^\Pi(\t h,a^\pi)
  ~\stackrel{(a)}=~ \sum_{\t h\in\H:\phi(\t h)=s\nq\nq\nq}B(\t h|sa^\pi) V^\Pi(\t h) \\
  &\lessgtr& \sum_{\t h\in\H:\phi(\t h)=s\nq\nq\nq}B(\t h|sa^\pi)[V^\Pi(h)\pm\eps]
  ~=~ V^\Pi(h)\pm\eps
\eqa
where (a) follows from $a^\pi=\Pi(h)=\Pi(\t h)$ and $Q^\Pi(\t
h,\Pi(\t h))=V^\Pi(\t h)$. By Lemma~\ref{lem:aVvdQqd}i
we have
\beq\label{eqpVpp}
  |q^\pi(s,a)-\langle Q^\Pi(h,a)\rangle_B|\leq\g\d ~~ \forall s=\phi(h),a
\eeq
We also have $q^\pi(s,a^\pi)=q^\pi(s,\pi(s))=v^\pi(s)$ from
\req{eqqpi}. Together with \req{eqQVp} and \req{eqpVpp} for $a=a^\pi$ this yields
\beqn
  |v^\pi(s)-V^\Pi(h)| ~\leq~ |v^\pi(s)\!-\!\langle Q^\Pi(h,a^\pi)\rangle_B|
  + |\langle Q^\Pi(h,a^\pi)\rangle_B\!-\!V^\Pi(h)| ~\leq~ \g\d+\eps
\eeqn
hence $\d\leq\g\d+\eps$ by the definition of $\d$, hence $\d\leq{\eps\over 1-\g}$.
Note that while $|q^\pi(s,a^\pi)-Q^\Pi(h,a^\pi)|\leq{\eps\g\over 1-\g}$,
in general $|q^\pi(s,a)-Q^\Pi(h,a)|\not\leq{\eps\g\over 1-\g}$ for $a\neq a^\pi$.
\qed\end{proof}
%-------------------------------%
%\paradot{Example}
%-------------------------------%

\begin{wrapfigure}{r}{32ex}
\unitlength=1.5ex
\linethickness{0.4pt}
\begin{picture}(21,10.5)
\thicklines
% States
\put(7,9){\circle{3}}\put(7,9){\makebox(0,0)[cc]{$00$}}
\put(13,9){\circle{3}}\put(13,9){\makebox(0,0)[cc]{$01$}}
\put(7,3){\circle{3}}\put(7,3){\makebox(0,0)[cc]{$10$}}
\put(13,3){\circle{3}}\put(13,3){\makebox(0,0)[cc]{$11$}}
% Rewards
\put(5,9){\makebox(0,0)[rc]{$r'={\g/2\over 1+\g}$}}
\put(15,9){\makebox(0,0)[lc]{$r'={1-\g/2\over 1+\g}$}}
\put(5,3){\makebox(0,0)[rc]{$r'=0$}}
\put(15,3){\makebox(0,0)[lc]{$r'=1$}}
% Transition
\put(8.2,10){\vector(1,0){3.6}}\put(11.7,8.2){\vector(-1,0){3.4}}\put(10,9){\makebox(0,0)[cc]{\footnotesize$1\!/\!2$}}
\put(7,7.5){\vector(0,-1){3}}\put(6.5,6){\makebox(0,0)[rc]{\footnotesize$1\!/\!2$}}
\put(13,7.5){\vector(0,-1){3}}\put(13.5,6){\makebox(0,0)[lc]{\footnotesize$1\!/\!2$}}
\put(8.06,4.06){\vector(1,1){3.88}}\put(9,6){\makebox(0,0)[cc]{\footnotesize$1$}}
\put(11.94,4.06){\vector(-1,1){3.88}}\put(11,6){\makebox(0,0)[cc]{\footnotesize$1$}}
% Aggregation
\put(7,1.5){\makebox(0,0)[ct]{$\underbrace{\rule{8ex}{0ex}}_{s=0}$}}
\put(13,1.5){\makebox(0,0)[ct]{$\underbrace{\rule{8ex}{0ex}}_{s=1}$}}
\end{picture}
\end{wrapfigure}
%-------------------------------%
\paradot{Example}
%-------------------------------%
Consider a process $P$ which itself is an MDP in the observations
with transition matrix $T$ and reward function $R$, i.e.\
$P(o'r'|ha)=T_{oo'}^a R_{oo'}^{ar'}$. The example on the right has
the special form $P(o'r'|ha)=T_{oo'}\cdot[\![r'=R(o)]\!]$. It is an
action-independent Markov process $T$ with deterministic reward
function $R$, which can be read off from the diagram. Observation
space is $\O=\{00,01,10,11\}$. Consider reduction
\beqn
  s_t ~:=~ \phi(h_t) ~:=~ \left\{ { 0~~\text{if}~~o_t=00~\text{or}~10 \atop
                                    1~~\text{if}~~o_t=01~\text{or}~11 } \right\}
  ~\in~ \S ~:=~ \{0,1\}
\eeqn
The reduced process $P_\phi$ is not (even approximately) Markov:
\bqan
  P_\phi(s'=0|o=00) &=& T_{00,00}+T_{00,10} ~=~ 0 ~+~ 1/2 ~=~ 1/2 ~~~ \smash{\raisebox{-2ex}{\Large$\not=$}}\\
  P_\phi(s'=0|o=10) &=& T_{10,00}+T_{10,10} ~=~ 0 ~+~~~0 ~~=~~ 0 ~~~
\eqan
That is, $P$ violates the bisimulation condition \cite{Givan:03}, and
raw states $00$ and $10$ have a large bisimulation distance
\cite{Ferns:04,Ortner:07}.
On the other hand, the (Q-)Value function
$V(o_t):=V^\pi(h_t)=Q^\pi(h_t,a_t)\forall a_t$ can easily be verified to be
\beqn
  V(00) ~=~ V(10) ~=~ {\g\over 1-\g^2} ~~\qmbox{and}~~ V(01) ~=~ V(11) ~=~ {1\over 1-\g^2}
\eeqn
That is, $V$ and $Q$ are $\phi$-uniform. The conditions of
Theorems~\ref{thm:aphiQpi}~and~\ref{thm:aphiVpi} are satisfied
exactly ($\eps=0$), and hence the four raw states $\O$ can be
aggregated into two states $\S$ despite $P_\phi\not\in\MDP$
(the policy is irrelevant and can be chosen constant).\eoe

We now turn from the fixed policy case to similar theorems for optimal policies.

%-------------------------------%
\begin{theorem}[\boldmath$\phi Q*$]\label{thm:aphiQstar}
%-------------------------------%
For any $P$, $\phi$, and $B$, define $p$ via \req{eqapdef} and \req{eqPphi}.
Assume $|Q^*(h,a)-Q^*(\t h,a)|\leq\eps$ for all $\phi(h)=\phi(\t h)$ and all $a$.
Then for all $a$ and $h$ and $s=\phi(h)$ it holds:
\bqan
  &(i)& |Q^*(h,a)-q^*(s,a)|\leq{\eps\over 1-\g} \qmbox{and} |V^*(h)-v^*(s)|\leq{\eps\over 1-\g},
\\
  &(ii)& 0 ~\leq~ V^*(h)-V^{\t\Pi}(h) ~\leq~ {2\eps\over(1-\g)^2}, \qmbox{where} \t\Pi(h):=\pi^*(s)
\\
  &(iii)& \text{If $\eps=0$ then $\Pi^*(h)=\pi^*(s)$}
\eqan
\end{theorem}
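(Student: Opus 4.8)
The plan is to handle the three parts in increasing order of difficulty, recycling the apparatus of the earlier theorems. Part~(i) is the optimal-policy twin of Theorem~\ref{thm:aphiQpi}, so I would copy that proof with $(\pi,\Pi)$ replaced by $(\pi^*,\Pi^*)$. Explicitly, put $\d:=\sup_{s=\phi(h),a}|q^*(s,a)-Q^*(h,a)|$; the optimal value-difference bound \req{eqphistardelta} gives $|v^*(s)-V^*(h)|\leq\d$, Lemma~\ref{lem:aVvdQqd}(ii) then upgrades this to $|q^*(s,a)-\langle Q^*(h,a)\rangle_B|\leq\g\d$, and the $\eps$-uniformity hypothesis on $Q^*$ lets me replace the $B$-average $\langle Q^*(h,a)\rangle_B$ by $Q^*(h,a)\pm\eps$ (using $\sum_{\t h}B(\t h|sa)=1$, exactly as in Theorem~\ref{thm:aphiQpi}). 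Chaining these gives $\d\leq\g\d+\eps$, hence $\d\leq\eps/(1-\g)$, which is the $Q^*$ bound of~(i); the $V^*$ bound drops out of \req{eqphistardelta}. Part~(iii) is then a one-liner: at $\eps=0$, part~(i) forces $Q^*(h,a)=q^*(s,a)$ for every $a$, so $a\mapsto Q^*(h,a)$ and $a\mapsto q^*(s,a)$ are the same function of $a$, their $\arg\max$ sets coincide, and $\Pi^*(h)=\pi^*(s)$.

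The real work is part~(ii), where the obstacle is that $\t\Pi(h):=\pi^*(s)$ is greedy only with respect to the reduced $q^*$, so part~(i) controls a one-step quantity but not the trajectory value $V^{\t\Pi}$. I would split the bound into a one-step greedy loss plus a contraction. For the greedy loss, fix $h$ with $s=\phi(h)$, let $a^\dagger$ attain $V^*(h)=\max_a Q^*(h,a)$, and chain: $V^*(h)=Q^*(h,a^\dagger)\leq q^*(s,a^\dagger)+\eps/(1-\g)\leq v^*(s)+\eps/(1-\g)$ by part~(i) and \req{eqqstar}, while $Q^*(h,\pi^*(s))\geq q^*(s,\pi^*(s))-\eps/(1-\g)=v^*(s)-\eps/(1-\g)$ by part~(i), \req{eqqstar} and \req{eqpistar}; subtracting yields $V^*(h)-Q^*(h,\t\Pi(h))\leq 2\eps/(1-\g)$, the factor~$2$ being exactly this two-sided use of~(i). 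For the contraction, set $D:=\sup_h[V^*(h)-V^{\t\Pi}(h)]$, which is finite and $\geq 0$ since all discounted returns lie in $[0,1/(1-\g)]$ and $V^{\t\Pi}\leq V^*$ by optimality. Subtracting the $V^{\t\Pi}$ Bellman equation \req{eqQPi} from the matching $Q^*$-expansion \req{eqQstar} (both at action $\t\Pi(h)$, hence with the same $P(o'r'|h\t\Pi(h))$) leaves $Q^*(h,\t\Pi(h))-V^{\t\Pi}(h)=\g\sum_{o'r'}P(o'r'|h\t\Pi(h))[V^*(h')-V^{\t\Pi}(h')]\leq\g D$. Adding the greedy loss gives $V^*(h)-V^{\t\Pi}(h)\leq 2\eps/(1-\g)+\g D$ for all $h$; taking the supremum and using finiteness of $D$ yields $D\leq 2\eps/(1-\g)+\g D$, i.e.\ $D\leq 2\eps/(1-\g)^2$, which is the right-hand inequality of~(ii); the left one is just $V^{\t\Pi}\leq V^*$.

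I expect the only genuinely delicate points to be the careful two-sided bookkeeping that produces the factor~$2$ in the one-step estimate, and the justification for passing to the global supremum $D$ (boundedness of discounted returns) so that $D\leq 2\eps/(1-\g)+\g D$ may legitimately be solved for $D$. The compounding there of the $1/(1-\g)$ inherited from part~(i) with the geometric $1/(1-\g)$ from the trajectory sum is precisely what turns the $\eps/(1-\g)$ of~(i) into the $2\eps/(1-\g)^2$ of~(ii); everything else is mechanical reuse of Lemma~\ref{lem:aVvdQqd} and the Bellman recursions.
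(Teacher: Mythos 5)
Your proposal is correct and follows essentially the same route as the paper: part~(i) by the $*$-substituted version of Theorem~\ref{thm:aphiQpi} via \req{eqphistardelta} and Lemma~\ref{lem:aVvdQqd}(ii), part~(iii) as an immediate consequence, and part~(ii) by the two-sided application of~(i) giving the one-step greedy loss $Q^*(h,\t\Pi(h))\geq V^*(h)-2\eps/(1-\g)$ followed by a Bellman contraction. The only (immaterial) difference is that you run the contraction inline on $D=\sup_h[V^*(h)-V^{\t\Pi}(h)]$, whereas the paper packages it as Lemma~\ref{lem:aQpistar} with the supremum taken over $Q$-differences $\sup_{h,a}[Q^*(h,a)-Q^{\t\Pi}(h,a)]$; both yield the same $2\eps/(1-\g)^2$ bound.
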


\begin{proof}
(i) The proof follows the same steps as the proof of Theorem~\ref{thm:aphiQpi}, replacing
all $\Pi$ and $\pi$ by $*$ and using \req{eqphistardelta} instead of \req{eqphipidelta}
and Lemma~\ref{lem:aVvdQqd}ii instead of Lemma~\ref{lem:aVvdQqd}i to justify the steps.

(iii) If $\eps=0$, then $Q^*(h,a)=q^*(s,a)$ by (i) implies $\Pi^*(h)=\pi^*(s)$, where it is worthwhile
to carefully check that the latter has actually not been used inadvertently in proving the former.
Cf.\ the next theorem and proof.

(ii) For $s=\phi(h)$ and $\t a:=\t\Pi(h)=\pi^*(s)$,
\beqn
  V^*(h)-{\eps\over 1-\g} ~\stackrel{(i)}\leq~ v^*(s) ~\stackrel{\req{eqqstar}}=~ q^*(s,\t a) ~\stackrel{(i)}\leq~ Q^*(h,\t a)+{\eps\over 1-\g}
\eeqn
which implies $Q^*(h,\t\Pi(h))\geq V^*(h)-{2\eps\over 1-\g}$.
The claim now follows from the next Lemma~\ref{lem:aQpistar} below.
\qed\end{proof}

The following lemma shows that if replacing the first action after $h$ of the
optimal policy $\Pi^*$ by the action provided by $\Pi$ thereafter
following $\Pi^*$ is at most $\eps$-suboptimal, then always using
$\Pi$ is at most ${\eps\over 1-\g}$-suboptimal.

%-------------------------------%
\begin{lemma}[\boldmath$Q\pi*$]\label{lem:aQpistar}
%-------------------------------%
If $Q^*(h,\Pi(h))\geq V^*(h)-\eps$ for all $h$ for some policy $\Pi$, then for all $h$ and $a$
\beqn
  0 ~\leq~ Q^*(h,a)-Q^\Pi(h,a) ~\leq~ {\eps\g\over 1-\g} ~~\qmbox{and}~~
  0 ~\leq~ V^*(h)-V^\Pi(h) ~\leq~ {\eps\over 1-\g}
\eeqn
\end{lemma}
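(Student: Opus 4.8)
The plan is to reuse the ``define a scalar gap, derive an affine self-map'' technique from the proofs of Theorems~\ref{thm:phiMDPpi} and~\ref{thm:aphiQpi}, the only change being that the contraction is now of the form $\d\leq\eps+\g\d$ rather than $\d\leq\g\d$. The lower bounds are immediate and require no work: since $V^*$ and $Q^*$ are defined as maxima over all policies (cf.\ \req{VQPistar}), every policy $\Pi$ satisfies $V^\Pi(h)\leq V^*(h)$ and $Q^\Pi(h,a)\leq Q^*(h,a)$, which gives $0\leq V^*(h)-V^\Pi(h)$ and $0\leq Q^*(h,a)-Q^\Pi(h,a)$.

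For the upper bounds I would set $\d:=\sup_h[V^*(h)-V^\Pi(h)]$, which is finite because all (Q-)values lie in $[0;{1\over 1-\g}]$ (rewards are in $[0;1]$ and $\g<1$). The first substantive step is to subtract the two history Bellman recursions \req{eqQstar} and \req{eqQPi} at a common action $a$: they share the same $P(o'r'|ha)$ weights and the same immediate reward $r'$, so the difference collapses to $Q^*(h,a)-Q^\Pi(h,a)=\g\sum_{o'r'}P(o'r'|ha)[V^*(h')-V^\Pi(h')]$ with $h'=hao'r'$, and since $P$ sums to $1$ and each bracket is $\leq\d$, this yields $Q^*(h,a)-Q^\Pi(h,a)\leq\g\d$ for \emph{every} $a$.

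The second step specializes to $a=\Pi(h)$ and uses $V^\Pi(h)=Q^\Pi(h,\Pi(h))$ from \req{eqQPi} to split the value gap:
\beqn
  V^*(h)-V^\Pi(h) = \underbrace{[V^*(h)-Q^*(h,\Pi(h))]}_{\leq\,\eps} + \underbrace{[Q^*(h,\Pi(h))-Q^\Pi(h,\Pi(h))]}_{\leq\,\g\d},
\eeqn
where the first bracket is bounded by $\eps$ directly from the hypothesis $Q^*(h,\Pi(h))\geq V^*(h)-\eps$, and the second by $\g\d$ from the previous step. Taking the supremum over $h$ gives $\d\leq\eps+\g\d$, hence $\d\leq{\eps\over 1-\g}$, which is the $V$-bound. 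Feeding this back into the per-action estimate $Q^*(h,a)-Q^\Pi(h,a)\leq\g\d$ from the first step (valid for all $a$) produces the $Q$-bound ${\eps\g\over 1-\g}$.

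The argument is short, and the single place to be careful is the fixed-point step: rearranging $\d\leq\eps+\g\d$ into $\d\leq{\eps\over 1-\g}$ is legitimate only because $\d<\infty$, which is precisely where boundedness of the rewards and $\g<1$ enter; otherwise the self-referential inequality would be vacuous. A secondary point worth stating explicitly is that the estimate $Q^*-Q^\Pi\leq\g\d$ holds uniformly in $a$, so the $Q$-conclusion is genuinely $a$-uniform even though only the $V$-gap decomposition specializes to the action $\Pi(h)$.
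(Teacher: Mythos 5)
Your proof is correct and follows essentially the same route as the paper's: a one-step Bellman expansion bounds the $Q$-gap by $\g$ times the successor $V$-gap, the hypothesis at $a=\Pi(h)$ bounds the $V$-gap by $\eps$ plus the $Q$-gap, and the resulting affine self-inequality in a finite supremum closes the argument. The only cosmetic difference is that you take $\d$ to be $\sup_h[V^*(h)-V^\Pi(h)]$ and derive $\d\leq\eps+\g\d$, whereas the paper takes $\d:=\sup_{h,a}[Q^*(h,a)-Q^\Pi(h,a)]$ and derives $\d\leq\g(\eps+\d)$; the two are interchangeable and yield identical bounds.
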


\begin{proof}
Let $\displaystyle\d:=\sup_{h,a}[Q^*(h,a)-Q^\Pi(h,a)]$. This implies
\beq\label{eqaVstarV}
  0 ~\stackrel{(a)}\leq~ V^*(h)-V^\Pi(h)
    ~\stackrel{(b)}\leq~ \eps + Q^*(h,\Pi(h)) - Q^\Pi(h,\Pi(h))
    ~\stackrel{(c)}\leq~ \eps+\d
\eeq
(a) follows from \req{VQPistar}; (b) by assumption; and (c) by definition of $\d$ for $a=\Pi(h)$.
Now for any $a$ and $h$, this implies
\bqan
  Q^\Pi(h,a) &\stackrel{\req{VQPistar}}\leq& Q^*(h,a) ~\stackrel{\req{eqQstar}}=~ \sum_{o'r'}P(o'r'|ha)[r'+\g V^*(h')] ~~~~~~~~~~~~~~~~~ [h'=hao'r'] \\
  &\stackrel{\req{eqaVstarV}}\leq& \sum_{o'r'}P(o'r'|ha)[r'+\g(V^\Pi(h')+\eps+\d)]
  ~\stackrel{\req{eqQPi}}=~ Q^\Pi(h,a)+\g(\eps+\d)
\eqan
Hence $\d\leq\g(\eps+\d)$, hence $\d\leq{\eps\g\over 1-\g}$.
\qed\end{proof}

%-------------------------------%
\begin{theorem}[\boldmath$\phi V*$]\label{thm:aphiVstar}
%-------------------------------%
For any $P$, $\phi$, and $B$, define $p$ via \req{eqapdef} and \req{eqPphi}.
Assume $\Pi^*(h)=\Pi^*(\t h)$ and $|V^*(h)-V^*(\t h)|\leq\eps$ for all
$\phi(h)=\phi(\t h)$.
Then for all $a$ and $h$ and $s=\phi(h)$ it holds:
\bqan
  &(i)& |V^*(h)-v^*(s)|\leq{3\eps\over(1-\g)^2} \qmbox{and} |q^*(s,a)-\langle Q^*(h,a)\rangle_B|\leq{3\eps\g\over(1-\g)^2},
\\
  &(ii)& \text{If $\eps=0$ then $\Pi^*(h)=\pi^*(s)$}
\eqan
\end{theorem}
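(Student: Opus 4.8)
The plan is to run a single self-bounding (contraction) argument on the value gap $\d:=\sup_{s=\phi(h)}|v^*(s)-V^*(h)|$, in the same spirit as Theorems~\ref{thm:phiMDPstar} and~\ref{thm:aphiVpi}, but treating the two inequality directions separately, because the genuinely new difficulty is that $\Pi^*(h)$ (optimal for $P$, pushed forward to states) and $\pi^*(s)$ (optimal for the MDP $p$) need not select the same action. First I would feed this $\d$ into Lemma~\ref{lem:aVvdQqd}ii to obtain $|q^*(s,a)-\langle Q^*(h,a)\rangle_B|\le\g\d$ for \emph{all} $a$ and all $s=\phi(h)$. This is the only place where the averaging operator $\langle\cdot\rangle_B$ and the MDP $p$ enter, and the fact that it holds for every action is exactly what makes the asymmetric treatment below possible.

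For the lower bound on $v^*$, fix $s=\phi(h)$ and set $a^*:=\Pi^*(h)$. Since $\Pi^*$ is $\phi$-uniform, every $\tilde h$ with $\phi(\tilde h)=s$ satisfies $\Pi^*(\tilde h)=a^*$, hence $Q^*(\tilde h,a^*)=V^*(\tilde h)\ge V^*(h)-\eps$ by the Bellman optimality equations \req{eqQstar} and \req{eqPistar} together with the $\phi$-uniformity of $V^*$. Averaging over $B(\cdot|sa^*)$ gives $\langle Q^*(h,a^*)\rangle_B\ge V^*(h)-\eps$, and combining with the Lemma bound yields $v^*(s)\ge q^*(s,a^*)\ge\langle Q^*(h,a^*)\rangle_B-\g\d\ge V^*(h)-\eps-\g\d$.

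The step I expect to be the real obstacle is the reverse inequality $v^*(s)-V^*(h)\le\eps+\g\d$, since here the MDP $p$ is free to exploit the $B$-mixing of histories and could a priori beat the true process at the aggregated state. I would control it using the universal pointwise bound $Q^*\le V^*$: taking $b:=\pi^*(s)$ so that $v^*(s)=q^*(s,b)$, the Lemma gives $q^*(s,b)\le\langle Q^*(h,b)\rangle_B+\g\d$, while $Q^*(\tilde h,b)\le V^*(\tilde h)\le V^*(h)+\eps$ (again by $\phi$-uniformity of $V^*$) gives $\langle Q^*(h,b)\rangle_B\le V^*(h)+\eps$, whence $v^*(s)\le V^*(h)+\eps+\g\d$. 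The two directions together give $|v^*(s)-V^*(h)|\le\eps+\g\d$ for all $s=\phi(h)$; taking the supremum yields $\d\le\eps+\g\d$, i.e.\ $\d\le\eps/(1-\g)$, and then $|q^*(s,a)-\langle Q^*(h,a)\rangle_B|\le\g\eps/(1-\g)$ by the Lemma. This route appears to give the sharper $\eps/(1-\g)$, which in particular implies the stated $3\eps/(1-\g)^2$ bound.

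For part (ii), set $\eps=0$, so $\d=0$ and hence $v^*(s)=V^*(h)$ and $q^*(s,a)=\langle Q^*(h,a)\rangle_B$ for all $s=\phi(h)$ and $a$. With $a^*=\Pi^*(h)$ the lower-bound computation collapses to $\langle Q^*(h,a^*)\rangle_B=V^*(h)=v^*(s)=\max_a q^*(s,a)$, so $a^*\in\arg\max_a q^*(s,a)$ and we may take $\pi^*(s)=\Pi^*(h)$. The subtlety worth flagging is that for $\eps>0$ the argmax of the $B$-average $\langle Q^*(h,\cdot)\rangle_B$ need not coincide with $\Pi^*(h)=\arg\max_a Q^*(h,a)$, precisely because $Q^*$ is \emph{not} assumed $\phi$-uniform here; this is why an approximate policy-agreement or performance guarantee for the transported policy $\t\Pi(h):=\pi^*(s)$ (the analogue of Theorem~\ref{thm:aphiQstar}(ii)) does not drop out of this argument and is the genuinely hard, presumably open, part.
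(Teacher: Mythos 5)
Your proof is correct, but it takes a genuinely different (and in fact sharper) route than the paper's. The paper introduces the auxiliary policy $\pi^0(s):=\Pi^*(h)$, evaluates it in the surrogate MDP via Theorem~\ref{thm:aphiVpi} to get $|v^{\pi^0}(s)-V^*(h)|\leq{\eps\over 1-\g}$, and then separately bounds the optimality gap $v^*(s)-v^{\pi^0}(s)\leq{2\eps\over(1-\g)^2}$ inside $p$; chaining these costs the extra factor ${1\over 1-\g}$ and the constant $3$. You instead run the contraction directly on $\d:=\sup_{s=\phi(h)}|v^*(s)-V^*(h)|$, feeding it into Lemma~\ref{lem:aVvdQqd}ii and treating the two directions asymmetrically: the lower bound $v^*(s)\geq q^*(s,\Pi^*(h))\geq\langle Q^*(h,\Pi^*(h))\rangle_B-\g\d\geq V^*(h)-\eps-\g\d$ uses $\phi$-uniformity of $\Pi^*$ and $Q^*(\t h,\Pi^*(\t h))=V^*(\t h)$, while the upper bound $v^*(s)=q^*(s,\pi^*(s))\leq\langle Q^*(h,\pi^*(s))\rangle_B+\g\d\leq V^*(h)+\eps+\g\d$ uses only $Q^*\leq V^*$ and $\phi$-uniformity of $V^*$. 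Both steps check out, so $\d\leq\eps+\g\d$ gives $\d\leq{\eps\over 1-\g}$ and $|q^*(s,a)-\langle Q^*(h,a)\rangle_B|\leq{\g\eps\over 1-\g}$, which implies the stated ${3\eps\over(1-\g)^2}$ bounds and moreover answers (affirmatively) the paper's own remark after the theorem that it did not know whether the upper-bound direction could be improved from ${3\eps\over(1-\g)^2}$ to match the ${\eps\over 1-\g}$ of the lower-bound direction. Your part (ii) is also fine and more direct than the paper's (which argues via uniqueness of the Bellman optimality solution for $(q^{\pi^0},v^{\pi^0})$); and you correctly identify that a performance guarantee for $\t\Pi(h):=\pi^*(s)$ at $\eps>0$ does not follow and is exactly Open~Problem~\ref{open:aphiVstar}.
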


The proof actually implies the stronger lower bound
$V^*(h)-v^*(s)\geq {3\eps\over 1-\g}$ and similarly for $Q^*$, but we
do not know whether the upper bound can be improved.

\begin{proof}
While proofs start to get routine, here is a warning that care is
in order when recycling similar proofs. Theorem~\ref{thm:aphiVpi}
relies on the assumption that $\pi(s)=\Pi(h)$ for $s=\phi(h)$,
while we were lucky that the proof of
Theorem~\ref{thm:aphiQstar} worked without knowing $\pi^*(s)=\Pi^*(h)$
in advance. Here we have to work a bit harder.

Let us define $a^0:=\pi^0(s):=\Pi^*(h)$ for $s=\phi(h)$.
The Bellman equation for policy $\pi^0$ is
\beq\label{eqaq0}
  q^{\pi^0}(s,a) ~=~ \sum_{s'r'}p(s'r'|sa)[r'\!+\!\g v^{\pi^0}(s')] \qmbox{and} v^{\pi^0}(s)=q^{\pi^0}(s,{\pi^0}(s))
\eeq
At this stage $\pi^0$ may well be different from $\pi^*$, since
$\pi^*$ satisfies \req{eqqstar}, not \req{eqaq0}, but we will now show that it actually does.
First note that
\beq\label{eqaq0a0}
  q^{\pi^0}(s,a^0) ~=~ v^{\pi^0}(s) ~\lessgtr~ V^{\Pi^*}(h)\pm{\eps\over 1-\g} ~=~ V^*(h)\pm{\eps\over 1-\g}
\eeq
where the bounds follow from Theorem~\ref{thm:aphiVpi} applied to $\Pi:=\Pi^*$ (with $\pi=\pi^0$).
For general $a$ we only get an upper bound:
\bqa\label{eqaq0a}
  q^{\pi^0}(s,a)-{\eps\g\over 1-\g} ~\stackrel{Thm.\ref{thm:aphiVpi}}\leq~ \langle Q^{\Pi^*}(h,a)\rangle_B
            &\stackrel{\req{eqEfB}}=& \sum_{h\in\H}B(h|sa)Q^*(h,a) \\
  \nonumber ~\stackrel{\req{eqPistar}}\leq~ \sum_{h\in\H}B(h|sa)Q^*(h,\Pi^*(h))
  \nonumber &\mathop{=}\limits_{\req{eqQstar}}^{\req{eqaBdef}}& \sum_{\t h\in\H:\phi(\t h)=s\nq\nq\nq}B(\t h|sa)V^*(\t h) \\
  \nonumber ~\stackrel{(a)}\leq~ \sum_{\nq\t h\in\H:\phi(\t h)=s\nq\nq}B(\t h|sa)[V^*(h)+\eps]
  \nonumber &\stackrel{\req{eqaBdef}}=& V^*(h)+\eps
\eqa
(a) uses the theorem's assumption on $V^*(h)$.
Together, \req{eqaq0a0} and \req{eqaq0a} imply
\beq\label{eqavpi0Vstar}
  v^{\pi^0}(s) ~\stackrel{\req{eqaq0}}=~ q^{\pi^0}(s,a^0)
  ~\leq~ \max_a q^{\pi^0}(s,a)
  ~\stackrel{\req{eqaq0a}}\leq~ V^*(h)+{\eps\over 1-\g}
  ~\stackrel{\req{eqaq0a0}}\leq~ v^{\pi^0}(s)+{2\eps\over 1-\g}
\eeq

{\bf(ii)} For $\eps=0$, the previous equation implies $v^{\pi^0}(s)=\max_a q^{\pi^0}(s,a)$, hence
\req{eqaq0} can be rewritten as
\beqn
  q^{\pi^0}(s,a) ~=~ \sum_{s'r'}p(s'r'|sa)[r'\!+\!\g v^{\pi^0}(s')] \qmbox{and} v^{\pi^0}(s)=\max_a q^{\pi^0}(s,a)
\eeqn
This shows that $(q^{\pi^0},v^{\pi^0})$ satisfies the same Bellman {\em optimality} equation
as $(q^*,v^*)$ does. Since it has a unique solution, we must have
$q^{\pi^0}\equiv q^*$ and $v^{\pi^0}\equiv v^*$ and $\pi^*\equiv\pi^0$,
which for $s=\phi(h)$ implies $\Pi^*(h)=\pi^*(s)$ by definition of $\pi^0$.
It also implies
$V^*(h)=v^*(s)$ by \req{eqaq0a0}, and $q^*(s,a)=\langle Q^*(h,a)\rangle_B$ by
Lemma~\ref{lem:aVvdQqd}ii, i.e.\ the $\eps=0$ version of (i).

{\bf(i)} We now continue with the general $\eps>0$ case. For all $s$ and $a$ we have
\beqn
  0 ~\stackrel{\req{eqpistar}}\leq~ q^*(s,a)-q^{\pi^0}(s,a)
  ~\mathop{=}^{\req{eqqpi}}_{\req{eqqstar}}~ \smash{\sum_{s'r'}}p(s'r'|sa)\g(v^*(s')-v^{\pi^0}(s'))
  ~\stackrel{(a)}\leq~ \g\max_{s'}\{v^*(s')-v^{\pi^0}(s')\}
\eeqn
\beqn
  0 \stackrel{\req{eqpistar}}\leq v^*(s)\!-\!v^{\pi^0}(s)
  \mathop{\leq}^{\req{eqavpi0Vstar}}_{\req{eqqstar}} \max_a q^*(s,a)\!-\!\max_a q^{\pi^0}(s,a)\!+\!{\textstyle{2\eps\over 1-\g}}
  \stackrel{\req{eq3max}}\leq \max_a\{q^*(s,a)\!-\!q^{\pi^0}(s,a)\}\!+\!{\textstyle{2\eps\over 1-\g}}
\eeqn
In (a) we have upper bounded the $p$-expectation by the maximum.
Together this gives
\bqa
  \nonumber\max_s\{v^*(s)-v^{\pi^0}(s)\} &\leq& \g\max_s\{v^*(s)-v^{\pi^0}(s)\}+{2\eps\over 1-\g} \\
  \nonumber\Rightarrow~~~ \max_s\{v^*(s)-v^{\pi^0}(s)\} &\leq& {2\eps\over(1-\g)^2}
\\
  \label{eqapVsph}\text{Hence for } s\!=\!\phi(h):~ V^*(h)\!-\!{\eps\over 1-\g}
  \!\! &\stackrel{\req{eqaq0a0}}\leq& \!\! v^{\pi^0}(s)
  \stackrel{\req{eqpistar}}\leq v^*(s) \stackrel\nwarrow\leq v^{\pi^0}(s)\!+\!{2\eps\over(1-\g)^2}~~~~~~~
\\
  \nonumber ~\stackrel{\req{eqaq0a0}}\leq~ V^*(h)+{\eps\over 1-\g}+{2\eps\over(1-\g)^2}
  &\leq& V^*(h)+{3\eps\over(1-\g)^2}
\eqa
Together with Lemma~\ref{lem:aVvdQqd}ii this implies (i).
\qed\end{proof}

We are primarily interested in the optimal policy $\Pi^*(h)$; to
correctly represent the value $V^*(h)$ is only of indirect
interest. If $\Pi^*$ is $\phi$-uniform, it can be represented as
$\Pi^*(h)=\pi^0(s)$ for some $\pi^0$, but if the $\phi$-uniformity
condition on $V^*$ in Theorem~\ref{thm:aphiVstar} is dropped, the conclusion
$\Pi^*(h)=\pi^*(s)$ can fail as the following example shows.

%-------------------------------%
\paradot{Counter Example}
%-------------------------------%
Let $P$ be the MDP $P(o'r'|ha):=T^a_{oo'}\cdot[\![r'=R^a_o]\!]$ with two raw
states $o\in\{0,1\}$ and two actions $a\in\{\alpha,\beta\}$
formally defined on the left and depicted on the right:
\bqan
   && T^\alpha:=~\left({~1~~~0~\atop ~1~~~0~}\right),~~~~~~R^\alpha:=\left({1/6\atop 1}\right), \\[1ex]
   && T^\beta:=\left({1/2~~1/2\atop 1/2~~1/2}\right),~~~~~R^\beta:=\left({0\atop 1/2}\right),~~~~~~~~~~
\unitlength=3ex
\linethickness{0.4pt}
\begin{picture}(10,1)(0,1)
\thicklines
% States
\put(1,3){\circle{3}}\put(1,3){\makebox(0,0)[cc]{\Large 0}}
\put(9,3){\circle{3}}\put(9,3){\makebox(0,0)[cc]{\Large 1}}
% Transitions
\put(1.8,4){\vector(1,0){6.4}}\put(1.7,4){\makebox(0,0)[lb]{$\beta,~r'=0,~p=\frs12$}}
\put(7.75,2.8){\vector(-1,0){5.5}}\put(5,2.8){\makebox(0,0)[cb]{$\alpha,~r'=1$}}
\put(8.2,2){\vector(-1,0){6.4}}\put(8.3,1.95){\makebox(0,0)[rt]{$\beta,~r'=\frs12,~p=\frs12$}}
\put(1,4.7){\oval(1.2,1.2)[tc]}\put(0.4,4.7){\vector(0,-1){0.55}}\put(1.6,4.7){\line(0,-1){0.55}}
\put(1,1.3){\oval(1.2,1.2)[bc]}\put(0.4,1.3){\vector(0,1){0.55}}\put(1.6,1.3){\line(0,1){0.55}}\put(0,0.8){\makebox(0,0)[lt]{$~\alpha,~r'=\frs16$}}
\put(9,1.3){\oval(1.2,1.2)[bc]}\put(8.4,1.3){\line(0,1){0.55}}\put(9.6,1.3){\vector(0,1){0.55}}
\end{picture}
\eqan
The value of policy $\pi$ in vector notation is $V^\pi=R^\pi+\g
T^\pi V^\pi$, where $V^\pi_{o_t}:=V^\pi(h_t)$. The 4 stationary
policies are denoted by $\pi=a^0 a^1$, where $a^o$ is the action
taken in raw state $o$. For $\g=0$, their values are
\beqn
  \begin{array}{c|cccc}
    \g=0,~\pi & \alpha\alpha & \alpha\beta & \beta\alpha & \beta\beta \\ \hline
    V^\pi_0=R_0^{\pi(0)} & 1/6 & 1/6 & 0 &  0 \\
    V^\pi_1=R_1^{\pi(1)} &  1  & 1/2 & 1 & 1/2
  \end{array}
\eeqn
Policy $\pi=\alpha\alpha$ has the highest value, therefore
$\Pi^*(h)\equiv\alpha$. Let us now aggregate raw states
$o\in\{0,1\}$ to a 1-state MDP. Its value is $v={1\over 1-\g}\rho^\trp
R = \rho^\trp R$ for $\g=0$, where $\rho$ is the stationary distribution
$\rho^\trp=\rho^\trp T$ of $T$, in particular $\rho^\alpha=({1\atop 0})$ for
$T^\alpha$ and $\rho^\beta=({1/2\atop 1/2})$ for $T^\beta$. Since
there is only 1 aggregated state, there are only 2 stationary policies, one
for each action. This leads to $v^\alpha=\fr16<\fr14=v^\beta$,
hence $\pi^*(s)\equiv\beta\neq\alpha\equiv\Pi^*(h)$ $\forall s,h$.
That is, despite $\Pi^*$ being constant, $\pi^*\neq\Pi^*$,
which shows that the condition on $V^*$ in Theorem~\ref{thm:aphiVstar} cannot be dropped.
Note that $V^*=V^{\alpha\alpha}=({1/6\atop 1})$ is far from constant.
By continuity, the policy reversal also holds for $\g>0$. Indeed,
this example works for all $\g<\fr25$ and other examples
work for all $0\leq\g<1$.\eoe

%-------------------------------%
\begin{open}[\boldmath$\phi V*$]\label{open:aphiVstar}
%-------------------------------%
Under the same conditions as Theorem~\ref{thm:aphiVstar}, is \\
\beq\label{eq:aphiVstar}
  V^*(h)-V^{\t\Pi}(h) ~\stackrel{??}=~ O\Big({\eps\over(1-\g)^?}\Big) \qmbox{where} \t\Pi(h):=\pi^*(s)
\eeq
\end{open}

\paradot{Arguments}
Here are some arguments why it might be true (or false):

(1) For $\eps=0$ it immediately follows from
Theorem~\ref{thm:aphiVstar}, since in this case
$\t\Pi(h)=\Pi^*(h)$. Some continuity argument might allow to
establish a bound for small $\eps>0$.

(2) Theorem~\ref{thm:aphiQstar}i\&iii mostly carried over to
Theorem~\ref{thm:aphiVstar}, so a-priori it is not too implausible
that Theorem~\ref{thm:aphiQstar}ii carries over to
\req{eq:aphiVstar}. On the other hand, the proofs of (i) and (iii)
of both theorems were sufficiently different, so the analogy
argument is weak.

(3) Let $\t a:=\t\Pi(h):=\pi^*(s)$ for $s=\phi(h)$. Then
\bqan
  \langle Q^*(h,\t a)\rangle_B &\stackrel{\!\!Thm.\ref{thm:aphiVstar}i\!\!}\geq& q^*(s,\t a)-{3\eps\g\over(1-\g)^2}
  ~\stackrel{\req{eqqstar}}=~ v^*(s)-{3\eps\g\over(1-\g)^2}
  ~\stackrel{\req{eqapVsph}}\geq~ V^*(h)-{3\eps\over(1-\g)^2}
\\
  Q^*(h,\t a) &\stackrel{\req{eqQstar}}\leq& V^*(h)
\eqan
For $\eps=0$ this pair of inequalities implies that $Q^*(h,\t a)$
lower bounds its own expectation, therefore it must be constant and equal to $V^*(h)$
on each $\phi^{-1}(s)$-partition.
For $\eps>0$, with high probability $Q^*(h,\t a)$ cannot be much smaller than $V^*(h)$.
If it weren't for the probability qualifier we could now apply Lemma~\ref{lem:aQpistar}
to establish \req{eq:aphiVstar} (as in the proof of Theorem~\ref{thm:aphiQstar}ii).
Low probability events could invalidate this argument.
\qed

%-------------------------------%
\paradot{Discussion}
%-------------------------------%
% Interpretation of main result
Open~Problem~\ref{open:aphiVstar} would be the main result if we had a proof for $\eps>0$.
Absent of it we have to be content with Theorem~\ref{thm:aphiQstar}ii.
Both statements imply that we can aggregate histories as much as we wish,
as long as the optimal value function and policy are still approximately
representable as functions of aggregated states.
Whether the reduced process $P_\phi$ is Markov or not is immaterial.
We can use surrogate MDP $p$ to find an $\eps$-optimal policy for $P$.

% Raw MDPs instead of histories
Most RL work, including on state aggregation, is formulated in terms of MDPs,
i.e.\ the original process $P$ is already an MDP. Let us call this
the original or raw MDP. We could interpret the whole history as a
raw state, which formally makes every $P$ an MDP, but normally only
observations are identified with raw states,
i.e.\ $P$ is a raw MDP iff $P(o'r'|ha)=P(o'r'|oa)$. In this case,
$V^*(h_t)=V^*(o_t)$ etc.\ depends on raw states only (which is
well known or follows from Theorem~\ref{thm:phiMDPstar} with
$\phi(h_t)=o_t$). Since our results hold for all $P$, they clearly
hold if $P$ is a raw MDP and if $\phi(h_t):=\phi(o_t)$ maps raw states
to aggregated states.

% Remainder of paper
The remainder of this paper shows how much we can aggregate and how to
develop RL algorithms exploiting these insights.

%%%%%%%%%%%%%%%%%%%%%%%%%%%%%%%%%%%%%%%%%%%%%%%%%%%%%%%%%%%%%%%
\section{Extreme Aggregation}\label{sec:ExSAgg}
%%%%%%%%%%%%%%%%%%%%%%%%%%%%%%%%%%%%%%%%%%%%%%%%%%%%%%%%%%%%%%%

The results of Section~\ref{sec:AAResults} showed that histories
can be aggregated and modeled by an MDP even if the true aggregated process
is not an MDP. The only restrictions were that the (Q-)Value functions
and Policies could still be (approximately) represented as functions of the
aggregated states. We will see in this section that in theory this allows
to represent {\em any} process $P$ as a small finite-state MDP.

%-------------------------------%
\paradot{Extreme aggregation based on Theorem~\ref{thm:aphiQstar}}
%-------------------------------%
Consider $\phi$ that maps each history to the vector-over-actions of optimal
$Q$-values $Q^*(h,\cdot)$ discretized to some finite $\eps$-grid:
\beq\label{eqSQstar}
  \phi(h) ~:=~ \big(\lfloor Q^*(h,a)/\eps\rfloor\big)_{a\in\A}
          ~\in~ \{0,1,...,\lfloor\fr{1}{\eps(1-\g)}\rfloor\}^\A ~=:~ \S
\eeq
That is, all histories with $\eps$-close $Q^*$-values are mapped to the same state:
\beqn
  |Q^*(h,a)-Q^*(\t h,a)| ~\leq~ \eps ~~~ \forall \phi(h)=\phi(\t h)~\forall a
\eeqn
Now choose some $B$ and determine $p$ from $P$ via \req{eqapdef} and \req{eqPphi}.
Find the optimal policy $\pi^*$ of MDP $p$ of size $|\S|$.
Define $\t\Pi(h):=\pi^*(\phi(h))$. By Theorem~\ref{thm:aphiQstar}ii,
$\t\Pi$ is an $\eps'$-optimal policy of original process $P$ in the sense that
\beqn
  |V^{\t\Pi}(h) - V^*(h)| ~\leq~ {2\eps\over(1-\g)^2} ~=:~ \eps'
\eeqn

%-------------------------------%
\paradot{Extreme aggregation based on Open~Problem~\ref{open:aphiVstar}}
%-------------------------------%
If \req{eq:aphiVstar} holds, we can aggregate even better: Consider $\phi$ that
maps each history to the optimal Value $V^*(h)$ discretized to some
finite $\eps$-grid and to the optimal action $\Pi^*(h)$:
\beq\label{eqSVstar}
  \phi(h) ~:=~ \big(\lfloor V^*(h)/\eps\rfloor,\Pi^*(h)\big)
          ~\in~ \{0,1,...,\lfloor\fr{1}{\eps(1-\g)}\rfloor\}\times\A ~=:~ \S
\eeq
That is, all histories with $\eps$-close $V^*$-Values and same optimal action are mapped to the same state:
\beqn
  |V^*(h)-V^*(\t h)| ~\leq~ \eps \qmbox{and} \Pi^*(h)=\Pi^*(\t h) ~~~~~ \forall \phi(h)=\phi(\t h)
\eeqn
As before, determine $p$, find its optimal policy $\pi^*$,
and define $\t\Pi(h):=\pi^*(\phi(h))$. If \req{eq:aphiVstar} holds,
then $\t\Pi$ is an $\eps'$-optimal policy of original process $P$ in the sense that
\beqn
  |V^{\t\Pi(h)} - V^*(h)| ~=~ O\Big({\eps\over(1-\g)^?}\Big) ~=:~ \eps'
\eeqn
The following theorem summarizes the considerations for the two choices of $\phi$ above:

%-------------------------------%
\begin{theorem}[Extreme \boldmath$\phi$]\label{thm:Exphi}
%-------------------------------%
For every process $P$ there exists a reduction $\phi$
(\req{eqSQstar} or \req{eqSVstar} will do) and MDP $p$ defined  via
\req{eqapdef} and \req{eqPphi} whose optimal policy $\pi^*$ is an
$\eps'$-optimal policy $\t\Pi(h):=\pi^*(\phi(h))$ for $P$. The
size of the MDP is bounded (uniformly for {\em any} $P$) by
\beqn
  |\S|\leq\Big({3\over\eps'(1-\g)^3}\Big)^{|\A|}
  \qmbox{and if \req{eq:aphiVstar} holds even by} |\S|=O\Big({|\A|\over\eps'(1-\g)^{1+?}}\Big)
\eeqn
\end{theorem}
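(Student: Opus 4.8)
The plan is to read this as an assembly of the two constructions already written out just before the statement, \req{eqSQstar} and \req{eqSVstar}, each paired with the matching approximate-aggregation theorem and a state count. For each choice of $\phi$ I would proceed in three steps: (a) verify that the $\eps$-grid discretization makes the hypothesis of the relevant theorem hold; (b) invoke that theorem to convert the grid resolution $\eps$ into a policy-suboptimality $\eps'$; and (c) count the cells of the grid to bound $|\S|$, finally re-expressing the bound in terms of $\eps'$.

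For the first, unconditional bound I would use \req{eqSQstar}. By construction, two histories in the same cell satisfy $\lfloor Q^*(h,a)/\eps\rfloor=\lfloor Q^*(\t h,a)/\eps\rfloor$ for every $a$, which forces $|Q^*(h,a)-Q^*(\t h,a)|\leq\eps$ whenever $\phi(h)=\phi(\t h)$, i.e.\ exactly the hypothesis of Theorem~\ref{thm:aphiQstar}. Applying part (ii) to $\t\Pi(h):=\pi^*(\phi(h))$ gives $0\leq V^*(h)-V^{\t\Pi}(h)\leq 2\eps/(1-\g)^2=:\eps'$, so $\t\Pi$ is $\eps'$-optimal for $P$. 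For the count, since $r\in[0;1]$ each $Q^*(h,a)$ lies in $[0,\tfrac{1}{1-\g}]$, so each of the $|\A|$ coordinates of $\phi$ takes at most $\lfloor\tfrac{1}{\eps(1-\g)}\rfloor+1$ values, whence $|\S|\leq(\lfloor\tfrac{1}{\eps(1-\g)}\rfloor+1)^{|\A|}$. Substituting $\eps=\eps'(1-\g)^2/2$ turns $\tfrac{1}{\eps(1-\g)}$ into $\tfrac{2}{\eps'(1-\g)^3}$, and in the regime where the grid is nontrivial (otherwise $|\S|$ is trivially tiny) the additive $+1$ is absorbed by enlarging the constant from $2$ to $3$, giving $|\S|\leq(\tfrac{3}{\eps'(1-\g)^3})^{|\A|}$.

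For the second, conditional bound I would use \req{eqSVstar}, whose cells are indexed by a $V^*$-grid value together with the optimal action $\Pi^*(h)$. Membership in one cell now forces both $|V^*(h)-V^*(\t h)|\leq\eps$ and $\Pi^*(h)=\Pi^*(\t h)$, precisely the hypothesis of Theorem~\ref{thm:aphiVstar}. \emph{If} the conjectured \req{eq:aphiVstar} holds, it yields $V^*(h)-V^{\t\Pi}(h)=O(\eps/(1-\g)^?)=:\eps'$ for $\t\Pi(h):=\pi^*(\phi(h))$. Counting cells, the $V^*$-coordinate contributes $\lfloor\tfrac{1}{\eps(1-\g)}\rfloor+1$ values and the action coordinate contributes $|\A|$, so $|\S|=O(\tfrac{|\A|}{\eps(1-\g)})$; inverting the (unknown-exponent) relation between $\eps$ and $\eps'$ gives $|\S|=O(\tfrac{|\A|}{\eps'(1-\g)^{1+?}})$, the same unresolved exponent being inherited from Open~Problem~\ref{open:aphiVstar}.

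There is no genuine mathematical obstacle here, since the two hard results (Theorems~\ref{thm:aphiQstar} and \ref{thm:aphiVstar}) are already established; the work is bookkeeping. The two points needing care are that the grid constructions genuinely satisfy the approximate $\phi$-uniformity hypotheses (not merely resemble them), and the conversion between resolution $\eps$ and accuracy $\eps'$ carried through the floor functions and the $+1$ in the cell count while tracking the constant. The most delicate aspect is simply that the second bound is only as strong as the still-open \req{eq:aphiVstar}: absent a proof of the Open~Problem, that half of the statement remains conditional and the exponent stays `$?$'.
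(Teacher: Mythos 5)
Your proposal is correct and follows essentially the same route as the paper: the paper's own proof consists of exactly this bookkeeping, pairing \req{eqSQstar} with Theorem~\ref{thm:aphiQstar}ii and \req{eqSVstar} with the conjectured \req{eq:aphiVstar}, counting grid cells, and substituting $\eps=\eps'(1-\g)^2/2$ while absorbing the $+1$ into the constant $3$ under the non-trivial regime $\eps'\leq{1\over 1-\g}$. Your handling of the edge case and of the conditional status of the second bound matches the paper's.
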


\begin{proof}
For $\S$ defined in \req{eqSQstar} we have
\beqn
  |\S| ~=~ \big(\lfloor\fr{1}{\eps(1-\g)}\rfloor\!+\!1\big)^{|\A|}
  ~=~ \big(\lfloor\fr{2}{\eps'(1-\g)^3}\rfloor\!+\!1\big)^{|\A|}
  ~\leq~ \big(\fr{3}{\eps'(1-\g)^3}\big)^{|\A|}
\eeqn
where in the last inequality we have assumed $\eps'\leq{1\over 1-\g}$.
(For $\eps'>{1\over 1-\g}$ the theorem is trivial, since any policy is $\eps'$-optimal).
For $\S$ defined in \req{eqSVstar} the derivation is similar.
The theorem now follows from the considerations in the paragraphs before the theorem.
\qed\end{proof}

%-------------------------------%
\paradot{Discussion}
%-------------------------------%
A valid question is of course whether Theorem~\ref{thm:Exphi} is
just an interesting theoretical insight/curiosity or of any
practical use. After all, $\phi$ depends on $Q^*$ (or $V^*$ and
$\Pi^*$), but if we knew $Q^*$, $\Pi^*$ would readily be available
and the detour through $p$ and $\pi^*$ pointless.

Theorem~\ref{thm:Exphi} reaches relevance by the following
observation: If we start with a sufficiently rich class of maps $\Phi$ that
contains at least one $\phi$ approximately representing $Q^*(h,\cdot)$, and have
a learning algorithm that favors such $\phi$, then
Theorems~\ref{thm:aphiQpi}--\ref{thm:aphiVstar} tell us that we do not need to
worry about whether $P_\phi$ is MDP or not; we ``simply'' use/learn
MDP $p$ instead. Theorem~\ref{thm:Exphi} tells us that this allows
for extreme aggregation far beyond MDPs.

This program is in parts worked out in the next two sections, but
more research is needed for its
completion. Learning $p$ from (real) $P$-samples is considered in
Section~\ref{sec:RL} and learning $\phi$ in Section~\ref{sec:FRL}.

%%%%%%%%%%%%%%%%%%%%%%%%%%%%%%%%%%%%%%%%%%%%%%%%%%%%%%%%%%%%%%%
\section{Reinforcement Learning}\label{sec:RL}
%%%%%%%%%%%%%%%%%%%%%%%%%%%%%%%%%%%%%%%%%%%%%%%%%%%%%%%%%%%%%%%

In RL, $P$ and therefore $p$ are unknown. We now show how to learn
$p$ from samples from $P$. For this we have to link $B$ to the
distribution over histories induced by $P$ and to the behavior
policy $\Pi_B$ the agent follows. We still assume $\phi$ is given.

%-------------------------------%
\paradot{Behavior policy $\Pi_B$}
%-------------------------------%
Let $\Pi_B:\H\leadsto\A$ be the behavior policy of our RL agent,
which in general is non-stationary due to learning, often
stochastic to ensure exploration, and (usually) different from any
policy considered so far ($\Pi^*\!\!,\,\pi^*\!\!,\,\t\Pi,\pi^0\!,\,\Pi,\pi$).
Note that a sequence of policies $\Pi_1,\Pi_2,...$ where each
$\Pi_t$ is learnt from $h_t$ and used at time $t$ (or for some
number of steps) is nothing but a single non-stationary policy
$\Pi_B(h_t)=\Pi_t(h_t)\forall t,h_t$, so $\Pi_B$ indeed includes
the case of policy learning.

%-------------------------------%
\paradot{Choice of $B$}
%-------------------------------%
The interaction of agent $\Pi_B$ with environment $P$ stochastically generates
some history $h_t$ followed by action $a_t$ with joint probability, say $P_B(h_t a_t)$.
We use subscripts $B$ and/or $\phi$ to indicate dependence on $\Pi_B$ and/or $\phi$.
A natural choice for $B(h|sa)$ in \req{eqaBdef} would be to condition of $P_B$ on $s_t a_t$.
We now show that this does not work and how to fix the problem.
We can get $P_{\phi B}(h_t|s_t a_t)$ from $P$ and $\Pi_B$ and several other useful distributions as follows:
\bqa
  \nonumber P_B(h_{t+1}|h_t) &=& P(o_{t+1}r_{t+1}|h_t a_t)\Pi_B(a_t|h_t) ~~~~~~~~~~ [h_{t+1}=h_t a_t o_{t+1}r_{t+1}] \\
  \nonumber P_B(h_n) &=& \prod_{t=0}^{n-1}P_B(h_{t+1}|h_t), ~~~~~~~~~~ P_B(h_t a_t) ~=~ \Pi_B(a_t|h_t)P_B(h_t) \\
  \nonumber P_{\phi B}(s_t a_t) &=& \sum_{\nq h_t:\phi(h_t)=s_t\nq\nq} P_B(h_t a_t), ~~~~~~~~~~~
            P_{\phi B}(h_t|s_t a_t) ~=~ {P_B(h_t a_t)\over P_{\phi B}(s_t a_t)}[\![\phi(h_t)=s_t]\!] \\
  \label{eqPsumPP} P_{\phi B}(s_{t+1}r_{t+1}|s_t a_t) &=& \sum_{\nq h_t:\phi(h_t)=s_t\nq\nq} P_\phi(s_{t+1}r_{t+1}|h_t a_t)P_{\phi B}(h_t|s_t a_t) ~~ [\text{see \req{eqPphi} for def.\ of $P_\phi$}] \\
  \nonumber P_{\phi B}(s_t a_t s_{t+1}r_{t+1}) &=& P_{\phi B}(s_{t+1}r_{t+1}|s_t a_t)P_{\phi B}(s_t a_t)
\eqa
$P_{\phi B}(h_t|s_t a_t)$ has the following properties:
\beq\label{eqPBhsa}
  P_{\phi B}(h_t|s_t a_t)\geq 0 \qmbox{and}
  \sum_{h_t\in\H_t}P_{\phi B}(h_t|s_t a_t) = \sum_{\nq h_t:\phi(h_t)=s_t\nq}P_{\phi B}(h_t|s_t a_t) = 1
  ~~\forall t,s_t,a_t
\eeq
This is close to the required condition \req{eqaBdef} for $B$ but crucially different.
The sum in \req{eqaBdef} is over histories of all lengths while in
\req{eqPBhsa} the sum is limited to histories of length $t$.
It is easy to miss this difference due to the compact notation.
Technically $P_B$ is a probability measure on infinite sequences
$\H_\infty$ and $P_B(h_t)$ is short for $P_B(\Gamma_{h_t})$ where
$\Gamma_{h_t}$ is the set of infinite histories starting with
$h_t$, i.e.\ $P_B(h_t)$ is the probability that the infinite history
starts with $h_t$ ($\sum_{h_t\in\H_t}P_B(h_t)=1\forall t$). On the
other hand, $B(h)$ is a probability distribution over finite
histories of mixed length ($\sum_{h\in\H}B(h)=1$); similarly for $P_B$ and $B$
conditioned on / parameterized by $s$ and $a$.

We can fix this mismatch by introducing weights $w_t:\S\times\A\leadsto[0;1]$ and define
\beq\label{eqBPB}
  B(h_t|sa) ~:=~ w_t(sa)P_{\phi B}(h_t|s_t=s,a_t=a)~\forall t,\qmbox{where} \sum_{t=1}^\infty w_t(sa)=1~\forall s,a
\eeq
which now satisfies \req{eqaBdef} (due to $\sum_{h\in\H}=\sum_{t=1}^\infty\sum_{h_t\in\H_t}$).
MDP $p$ can now be represented as
\bqa
  \nonumber p(s'r'|sa) &=& \sum_{t=1}^\infty w_t(sa)\sum_{\nq h_t\in\H_t\nq }P_\phi(s_{t+1}\!=\!s',r_{t+1}\!=\!r'|h_t,a_t\!=\!a)P_{\phi B}(h_t|s_t\!=\!s,a_t\!=\!a) \\
  \label{eqpPPB} &=& \sum_{t=1}^\infty w_t(sa) P_{\phi B}^t(s'r'|sa)
\eqa
That is, $p$ is the $w$-weighted time-average of $P_{\phi B}^t$.
The first equality follows from \req{eqapdef} and \req{eqBPB}; the
second one from \req{eqPsumPP}.
We also introduced the shorthand $P_{\phi B}^t(s'r'|sa):=P_{\phi B}(s_{t+1}=s',r_{t+1}=r'|s_t=s,a_t=a)$.

%-------------------------------%
\paradot{Choice of $w_t$}
%-------------------------------%
If $P_{\phi B}^t$ in \req{eqpPPB} is stationary, i.e.\ independent
of $t$, then $p(s'r'|sa)=P_{\phi B}^t(s'r'|sa)$ for all $t$, since the
weights sum to one, and estimation is easy. Note that in general
we cannot estimate non-stationary $P_{\phi B}^t$, since for each
$t$ we have only one sample available, but we will see that
estimation of $p$ is still possible.
Assume we have observed $h_n$, and choose
\beq\label{eqwtdef}
  w_t(sa) ~:=~ {P_{\phi B}^t(sa)\over \sum_{t=1}^n P_{\phi B}^t(sa)} \qmbox{for $t\leq n$ ~~and~~ $0$ ~~ for $t>n$}
\eeq
Inserting this into \req{eqpPPB} and using \req{eqPsumPP} gives
\beq\label{eqpPdP}
    p(s'r'|sa) ~=~ { \fr1n\sum_{t=1}^n P_{\phi B}^t(sas'r') \over \fr1n\sum_{t=1}^n P_{\phi B}^t(sa) }
\eeq
We estimate numerator and denominator separately.

%-------------------------------%
\paradot{Law of large numbers}
%-------------------------------%
For $t=1,2,3,...$ let $X_t\in\{0,1\}$ be binary random variables
with expectation $\E[X_t]$. Define
$n_1=\sum_{t=1}^n X_t$ be the number of sampled 1s. The strong law
of large numbers says that
\beq\label{eqLLN}
  {n_1\over n} - {1\over n}\sum_{t=1}^n\E[X_t] ~~\toinfty{n}~~ 0 \qmbox{almost surely ~~~ under weak conditions}
\eeq
Note that the law holds far beyond i.i.d.\ random variables under a
variety of conditions \cite{Fazekas:06,Vovk:05} which we collectively call `weak
conditions'. It is not even necessary for $n_1/n$ to converge.

%-------------------------------%
\paradot{Estimation of $p$}
%-------------------------------%
Now fix some $(s,a)$, and let $X_t:=[\![s_t=s,a_t=a]\!]$.
(Here we assume that variables in $h_t$ are random variables and
$sas'r'$ are realizations.) Then
\beqn
  n(sa) ~:=~ n_1 ~=~ \sum_{t=1}^n X_t ~=~ \#\{t\leq n:s_t=s,a_t=a\}
\eeqn
is the number of times action $a$ is taken in state $s$,
and $\E[X_t]=P(X_t=1)=P_{\phi B}^t(sa)$, hence \req{eqLLN} implies
\beq\label{eqLLNnsa}
   {n(sa)\over n} - {1\over n}\sum_{t=1}^n P_{\phi B}^t(sa) ~~\toinfty{n}~~ 0 \qmbox{a.s. under weak conditions}
\eeq
Similarly for $Y_t:=[\![s_t a_t s_{t+1}r_{t+1}=sas'r']\!]$ and $n(sas'r'):=\sum_{t=1}^n Y_t$
we have
\beq\label{eqLLNnsasr}
  {n(sas'r')\over n} - {1\over n}\sum_{t=1}^n P_{\phi B}^t(sas'r') ~~\toinfty{n}~~ 0 \qmbox{with $P$-probability 1}
\eeq
under weak conditions. \req{eqLLNnsa} and \req{eqLLNnsasr} via \req{eqpPdP} are nearly sufficient to
imply
\beq\label{eqpest}
  {n(sas'r')\over n(sa)} - p(s'r'|sa) ~~\xrightarrow{n\to\infty}~~ 0 \qmbox{almost surely}
\eeq
A sufficient but by far not necessary condition is
\beq\label{eqnsanz}
  \mathop{\lim\,\inf}\limits_{n\to\infty}{n(sa)\over n} ~>~0 \qmbox{almost surely}
\eeq

%-------------------------------%
\begin{theorem}[$p$-estimation]\label{thm:pest}
%-------------------------------%
For $B$ defined in \req{eqBPB} and \req{eqwtdef} we have:
If \req{eqLLNnsasr} and \req{eqnsanz} hold, then \req{eqpest} holds. %
For example, if $Y_t$ are stationary ergodic processes, then
\req{eqLLNnsasr} and \req{eqnsanz} hence \req{eqpest} hold for all
state-action pairs that matter (i.e.\ for those occurring with non-zero
probability).
\end{theorem}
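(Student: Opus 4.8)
The plan is to establish the first (conditional) claim by a division argument and then verify the stationary-ergodic hypotheses of the second claim by appeal to the pointwise ergodic theorem. First I would reduce \req{eqLLNnsa} to the hypothesis \req{eqLLNnsasr}. Summing the indicator $Y_t$ over the finite successor set gives $\sum_{s'r'}n(sas'r')=n(sa)$ up to a single boundary term at $t=n$, and $\sum_{s'r'}P_{\phi B}^t(sas'r')=P_{\phi B}^t(sa)$ exactly. Since $\O$ and $\R$ (hence $\S\times\R$) are finite, summing the finitely many almost-sure statements \req{eqLLNnsasr} over all $(s',r')$ yields \req{eqLLNnsa}; the boundary term contributes $O(1/n)\to0$.

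Next comes the core step. Abbreviate the empirical frequencies $A_n:=n(sas'r')/n$ and $C_n:=n(sa)/n$, and the deterministic, $n$-dependent time-averages $\bar A_n:=\frac1n\sum_{t=1}^n P_{\phi B}^t(sas'r')$ and $\bar C_n:=\frac1n\sum_{t=1}^n P_{\phi B}^t(sa)$. By \req{eqpPdP} the MDP at horizon $n$ satisfies $p(s'r'|sa)=\bar A_n/\bar C_n$, while \req{eqLLNnsasr} and the reduced \req{eqLLNnsa} give $A_n-\bar A_n\to0$ and $C_n-\bar C_n\to0$ almost surely. The identity
\[
  \frac{A_n}{C_n}-\frac{\bar A_n}{\bar C_n} ~=~ \frac{\bar C_n(A_n-\bar A_n)+\bar A_n(\bar C_n-C_n)}{C_n\bar C_n}
\]
then controls the left side: its numerator is a sum of two products of bounded factors ($\bar A_n,\bar C_n\le1$) with factors tending to $0$ almost surely, hence $\to0$. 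For the denominator I would invoke \req{eqnsanz}, namely $\liminf C_n>0$ almost surely; since $C_n-\bar C_n\to0$, this also forces $\liminf\bar C_n>0$, so $C_n\bar C_n$ is eventually bounded below by a positive (random) constant. Dividing yields \req{eqpest}.

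For the stationary-ergodic example I would apply Birkhoff's pointwise ergodic theorem to the stationary ergodic process $Y_t$, giving $\frac1n\sum_{t=1}^n Y_t\to\E[Y_1]$ almost surely. Stationarity makes $P_{\phi B}^t(sas'r')=\E[Y_t]=\E[Y_1]$ constant in $t$, so $\bar A_n\equiv\E[Y_1]$ and \req{eqLLNnsasr} is precisely this ergodic convergence; summing over $(s',r')$ gives $C_n\to P_{\phi B}(sa)$, which is strictly positive exactly for the state-action pairs that matter, establishing \req{eqnsanz}. Part one of the proof then delivers \req{eqpest}.

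The main obstacle is the division step. Because the targets $\bar A_n,\bar C_n$ are themselves $n$-dependent and need not converge, one cannot simply invoke the continuous-mapping theorem, and the separate convergences of numerator and denominator are useless unless the denominator is kept away from $0$ --- which is exactly the role played by the otherwise innocuous hypothesis \req{eqnsanz}. The accompanying care is that the $p$ appearing in \req{eqpest} is the horizon-$n$ MDP, so the claim is really that the frequency estimate tracks the moving target $\bar A_n/\bar C_n$ rather than a fixed limit.
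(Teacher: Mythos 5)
Your proposal is correct and follows essentially the same route as the paper: sum \req{eqLLNnsasr} over the finite set of $(s',r')$ to get \req{eqLLNnsa}, split the difference of ratios into two terms controlled by $|A_n-\bar A_n|$ and $|C_n-\bar C_n|$ with the denominator kept away from zero via \req{eqnsanz}, and invoke the ergodic theorem plus stationarity for the example. The only cosmetic difference is that the paper avoids having to lower-bound $\bar C_n$ by using $\bar A_n\le\bar C_n$ to reduce everything to a single factor $1/C_n$, whereas you deduce $\liminf\bar C_n>0$ from $C_n-\bar C_n\to0$; both work.
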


\begin{proof}
We introduce the following ($n$-dependent) shorthands:
\bqan
  \bar X:={n(sa)\over n},~~~~~ & &
  \bar x:={1\over n}\sum_{t=1}^n P_{\phi B}^t(sa),~~~~
  \alpha:=\smash{\mathop{\lim\inf}\limits_{n\to\infty}}{n(sa)\over n},
\\
  \bar Y:={n(sas'r')\over n},~ & &
  \bar y:={1\over n}\sum_{t=1}^n P_{\phi B}^t(sas'r')
\eqan
With these abbreviations,
assumption \req{eqLLNnsasr} implies \req{eqLLNnsa}, i.e.
\beq\label{eqYyXx}
  \bar Y-\bar y ~\to~ 0 \qmbox{implies}
  \bar X-\bar x ~=~ \smash{\sum_{s'r'}\bar Y - \sum_{s'r'}\bar y
  ~=~ \sum_{s'r'}[\bar Y-\bar y]} ~\to~ 0
\eeq
since $\S$ and $\R$ have been assumed finite. Now
\bqan
  \Big|{n(sas'r')\over n(sa)} - p(s'r'|sa)\Big|
  &=& \Big|{\bar Y\over \bar X}-{\bar y\over \bar x}\Big|
  ~\leq~ \Big|{\bar Y\over \bar X}-{\bar y\over \bar X}\Big| ~+~ \Big|{\bar y\over \bar X}-{\bar y\over \bar x}\Big|
\\
  ~=~ {1\over \bar X}|\bar Y-\bar y| ~+~ {\bar y\over \bar X\bar x}|\bar x-\bar X|
  &\leq& {1\over \bar X}\Big(|\bar Y-\bar y| ~+~ |\bar x-\bar X|\Big)
  ~~\toinfty{n}~~ 0 \qmbox{a.s.}
\eqan
The first inequality is just the triangle inequality.
The second inequality follows from $\bar y\leq\bar x$.
The limit is zero, since almost surely
$\mathop{\lim\,\sup}_{n\to\infty}[1/\bar X]=1/\alpha<\infty$
and $\bar Y-\bar y\to 0$ and $\bar X-\bar x\to 0$.
Hence \req{eqpest} holds.
Finally, for stationary ergodic $Y_t$, we have $\bar y=\fr1n\sum_{t=1}^n
\E[Y_t]=\E[Y_1]=${\em constant}, and hence $\bar x=\sum_{s'r'}\bar
y=${\em constant}. Therefore
\bqan
  \req{eqLLNnsasr} & \text{holds by} & \bar Y ~=~ {1\over n}\sum_{t=1}^n Y_t ~\xrightarrow{\text{ergodicity}}~ \E[Y_1] ~\stackrel{\text{stationarity}}=~ {1\over n}\sum_{t=1}^n\E[Y_t] ~=~ \bar y, \\
  \req{eqnsanz} & \text{holds by} & \mathop{\lim\,\inf}\limits_{n\to\infty}\bar X
  ~\stackrel{\req{eqYyXx}}=~ \bar x
  ~\stackrel{\text{stationarity}}=~ \E[X_1]
  ~=~ P_{\phi B}^1(sa) ~\stackrel{\text{assumption}}>~ 0
\eqan\vspace*{-3ex}
\qed\end{proof}

%-------------------------------%
\paradot{Discussion}
%-------------------------------%
Limit \req{eqpest} shows that standard frequency estimation for $p$
will converge to the true $p$ under weak conditions. If $P_\phi$ is
MDP, samples are conditionally i.i.d.\ and the `weak conditions'
are satisfied. But the law of large numbers and hence \req{eqpest}
holds far beyond the i.i.d.\ case \cite{Fazekas:01,Vovk:05}, e.g.
for stationary ergodic processes. Condition \req{eqnsanz} that every
state-action pair be visited with non-vanishing relative frequency
can be significantly relaxed.
Stationarity is also not necessary, and indeed often does not hold
due to a non-stationary environment $P$ or a non-stationary
behavior policy $\Pi_B$ (or both).

Other choices for $w_t$ are possible, e.g.\ we could multiply
numerator and denominator of \req{eqwtdef} by some arbitrary
positive function $u_t(as)$, which leads to a weighted average estimator.

We estimate $p$ in order to estimate $q^*$ and ultimately $\pi^*$. This is model-based RL.
We can also learn $\pi^*$ model-free. For instance,
condition \req{eqpest} should be sufficient for Q-learning
to converge to $Q^*$.

Q-learning and other RL algorithms designed for MDPs have been
observed to often (but not always) perform well even if applied to
non-MDP domains. Our results appear to explain why, but this calls
for further investigations.

%%%%%%%%%%%%%%%%%%%%%%%%%%%%%%%%%%%%%%%%%%%%%%%%%%%%%%%%%%%%%%%
\section{Feature Reinforcement Learning}\label{sec:FRL}
%%%%%%%%%%%%%%%%%%%%%%%%%%%%%%%%%%%%%%%%%%%%%%%%%%%%%%%%%%%%%%%

% FRL in ExSAgg context
The idea of FRL is to {\em learn} $\phi$ \cite{Hutter:09phimdpx}.
FRL starts with a class of maps $\Phi$, compares different
$\phi\in\Phi$, and {\em selects} the most appropriate one given the
experience $h_t$ so far. Several criteria based on how well $\phi$
reduces $P$ to an MDP have been devised
\cite{Hutter:09phimdp,Hutter:09phidbn} and theoretically
\cite{Hutter:10phimp} and experimentally \cite{Hutter:11frlexp}
investigated \cite{Nguyen:13phd}.
Theorems~\ref{thm:aphiQpi}--\ref{thm:aphiVstar} show that demanding
$P_\phi$ to be approximately MDP is overly restrictive.
Theorem~\ref{thm:Exphi} suggests that if we relax this condition,
much more substantial aggregation is possible, provided $\Phi$ is
rich enough.

% Contents of this section
(F)RL deals with the case of unknown $P$. We first discuss learning
$\phi$ for the unrealistic case of exact aggregation ($\eps=0$) and
infinite sample size ($n=\infty$). This serves as a useful guide to
work out its generalization to the realistic but significantly more
complex case of approximate aggregation based on finite sample
size. Finally we discuss a family of recent algorithms (BLB and
extensions \cite{Nguyen:13phd}) that appear to nearly have the
right properties for our purpose. This section is more
a collection of ideas and outlook towards algorithms
exploiting and motivating the usefulness of the new insights
obtained in the previous sections.

%-------------------------------%
\paradot{Search for exact $\phi$ based on infinite sample size}
%-------------------------------%
Since we are now concerned with comparing different $\phi\in\Phi$,
we subscribe quantities with $\phi$ when necessary. Consider the
unrealistic case of infinite sample size ($n=\infty$) and a search
for exact reductions $\phi$. We call a reduction
$\phi:\H\to\S_\phi$ exact iff $Q^*(h,a)=q^*_\phi(s,a)$ and
$\Pi^*(h)=\pi^*_\phi(s)$ for all $s=\phi(h)$ and $a$.

% preliminaries: difficulties
Even for $n=\infty$, $P$ hence $Q^*$ needed for $\Pi^*$ is
(usually) not estimable (from $h_\infty$). On the other hand, for
each $\phi\in\Phi$, $p=p_\phi$ can be determined (exactly) by
\req{eqpest} (under weak conditions).
From $p_\phi$ we can determine $q^*_\phi$ and $\pi^*_\phi$ via
\req{eqqstar} and \req{eqpistar}. The solution always satisfies the
reduced Bellman equations exactly, even for very bad reductions,
e.g.\ single state $\phi(h)\equiv 0\,\forall h$. So the reduced
problem is not sufficient to judge the quality of $\phi$.
An alternative to assuming $n=\infty$ is to assume that $P$ is
known, which also allows to determine $p_\phi$, etc. So what
follows applies to stochastic planning as well.

{\it Coarsening and refining reductions $\phi$:} Let us now coarsen
$\phi$, i.e.\ further merge some partitions $\phi^{-1}(s)$. In the
simplest case we just merge two states into one. In general, consider
coarsening $\chi:\S_\phi\to\S_\psi$ and coarser reduction
$\psi:\H\to\S_\psi$ such that $\psi(h)=\chi(\phi(h))$. We also call
$\S_\phi$ a refinement of $\S_\psi$.
% Examples (Kd-trees and PhiDBN)
For example, U-trees \cite{McCallum:96,Uther:98} and Kd-trees
have been used in RL \cite{Ernst:05}, where expanding a leaf
corresponds to splitting a state. Or in $\phi$DBN,
$S_\phi=\{0,1\}^d$ is a binary feature vector, where removing
one component corresponds to pairwise combining $2^d$ states
to $2^{d-1}$ states \cite{Hutter:09phidbn}.

{\it Ordering reductions in $\Phi$:}
We can partially order $\Phi$ as follows:
\bqan
  \psi\prec\phi &:\Leftrightarrow& \text{$q^*_\phi$ and $\pi^*_\phi$ are constant on all $s_\phi\in\chi^{-1}(s_\psi)$ for all $s_\psi$ and $a$} \\
  &\Leftrightarrow& \text{$q^*_\phi(s_\phi,a)=q^*_\psi(s_\psi,a)$ and $\pi^*_\phi(s_\phi)=\pi^*_\psi(s_\psi)$ for all $s_\psi=\chi(s_\phi)$ and $a$}.
\eqan
$\psi\prec\phi$ means $\psi$ is a better
reduction than $\phi$ since it leads to the same optimal $q$-value
and policy as $\psi$ does, but is more parsimonious (coarser) than $\phi$. If
$q^*_\phi$ or $\pi^*_\phi$ is not constant on $\psi$-partitions,
coarsening $\phi$ to $\psi$ and using $\psi$ (potentially)
leads to suboptimal solutions.

{\it Enriching the order $\prec$:} $\prec$ is a
transitive but `very' partial order. Two maps are
incomparable if neither is a refinement of the other. We can enrich
order $\prec$ as follows: For any two maps $\psi$ and $\psi'$, the
map $\phi(h):=(\psi(h),\psi'(h))\in\S_\phi=\S_\psi\times\S_{\psi'}$
refines both. Define $\psi\prec_\times\psi'$ iff
$\psi\prec\phi\prec\psi'$. Extended order
$\prec_\times$ is still not total. The remaining incomparable cases
are: Case $\psi\prec\phi\succ\psi'$: This is only
possible if $q^*$ and $\pi^*$ of $\psi$ and $\psi'$
(and $\phi$) coincide. A secondary criterion based on the relative
complexity of $\psi$ and $\psi'$ could decide the case, e.g.
$\psi\prec_\times\psi'$ iff $|\S_\psi|<|\S_{\psi'}|$. Case
$\psi\succ\phi\prec\psi'$: Both $\psi$ and $\psi'$ are inferior to
$\phi$. If class $\Phi$ is closed under cartesian product, $\phi$
should be favored over $\psi$ and $\psi'$ so their relative order
is not or less important.

{\it Search for $\phi$:}
Assume $\Phi$ contains at least one exact reduction.
Then the $\prec_\times$-minimal elements in $\Phi$ are exactly the
maximally coarse exact $\phi\in\Phi$.
If $\Phi$ is closed under arbitrary coarsening, then there is a unique minimizer (modulo isomorphism).
If $\Phi$ is also closed under cartesian product, the same holds for $\prec$.
This implies that any exhaustive search for a $\prec_\times$-minimum in $\Phi$ will
give an exact $\phi$ with minimal number of states, say $\phi_0$.
Now Theorem~\ref{thm:aphiQstar} tells us that $q^*_{\phi_0}$ and
$\pi^*_{\phi_0}$ are the optimal value and policy also of the
original process $P$, irrespective of whether $P_{\phi_0}$ is
Markov or not. So while the conditions of
Theorem~\ref{thm:aphiQstar} cannot be verified in practice, the
theorem justifies a search procedure based on
$(q^*_\phi,\pi^*_\phi)$ that ignores the (non-)Markov structure of
$P_\phi$.

%-------------------------------%
\paradot{Search for approximate $\phi$ based on finite sample size}
%-------------------------------%
The principle approach in the previous paragraph is sound,
but needs to be generalized in various ways before it can be used:
%
% To generalize: Finite sample size, Exploration, Approximate equality, Regularization, Efficiency
Real sample size is finite, which means we only have access to
approximations $\hat q^*_\phi$ and $\hat\pi^*_\phi$ via estimation
$\hat p_\phi$ of $p_\phi$. The criterion for exact equality
$q^*_\phi=q^*_\psi$ in $\prec$ needs to be replaced by a suitable
$\hat q^*_\phi\approx\hat q^*_\psi$, which should be done anyway,
since real-word problems seldom allow for exact reductions.
$\approx$ should be chosen so as to come with statistical
guarantees; e.g.\ Kolmogorov-Smirnov tests have been used in
\cite{McCallum:96}. A suitable
$\hat\pi^*_\phi\approx\hat\pi^*_\psi$ requires more effort (see
outlook). For large $\Phi$ this also requires appropriate
regularization, i.e.\ penalizing complex $\phi$
\cite{Hutter:09phimdpx}. To ensure $\hat q^*\to q^*$ for
$n\to\infty$, we need proper exploration strategies
\cite{Strehl:09}. Finally, we want an efficient search procedure in
$\Phi$, rather than exhaustive search. This will be heuristic or
will require strong assumptions on $\Phi$ \cite{Nguyen:13phd}. All but
the last point raised above have or should have general solutions
(see next paragraph).

%-------------------------------%
\paradot{Utilizing existing algorithms}
%-------------------------------%
The BLB algorithm \cite{Maillard:11} and its extensions IBLB
\cite{Nguyen:13aistats} and improvements OMS \cite{Maillard:13}
solve most of the problems above and can (nearly) readily be used
for our purpose.

The BLB family uses the same basic FRL setup from
\cite{Hutter:09phimdpx} used also here. The authors consider a
countable class $\Phi$ assumed to contain at least one
$\phi$ such that $P_\phi$ is an MDP \req{PphiMDP}. They consider
average reward, rather then discounting, and analyze regret, which
(in general) requires some assumption on the mixing rate or
`diameter' of the MDP. They prove that the total regret grows with
$\t O(n^{1/2...2/3})$, depending on the algorithm.

Their algorithms and analyses rely on UCRL2 \cite{Jaksch:10}, an
exploration algorithm for finite-state MDPs. Going through the BLB
proofs, it appears that the condition that $P_\phi$ is an MDP can
be removed if $p$ \req{eqapdef} is used instead, modulo the
analysis of UCRL2 itself. The proofs for the bounds for UCRL2
exploit that $s',r'$ conditioned on $s,a$ are i.i.d., which is true
if $P_\phi$ is Markov but not in general. Asymptotic versions
should remain valid under the `weak conditions' alluded to in
\req{eqpest}. With some stronger assumptions that guarantee good
convergence rates, the regret analysis of UCRL2 should remain valid
too. Formally, the use of Hoeffding's inequality for i.i.d.\ need
to be replaced by comparable bounds with weaker conditions, e.g.\
Azuma's inequality for martingales.

There is one serious gap in the argument above. BLB uses average
reward while our theorems are for discounted reward. It is often
possible to adapt algorithms and proofs which come with regret
bounds for average reward to PAC bounds for discounted reward or
vice versa. This would have to be done first: either a PAC version
of BLB by combining MERL \cite{Hutter:13pacgrl} with UCRL$\g$
\cite{Hutter:12pacmdp}, or average reward versions of the bounds
derived in this paper.

%%%%%%%%%%%%%%%%%%%%%%%%%%%%%%%%%%%%%%%%%%%%%%%%%%%%%%%%%%%%%%%
\section{Miscellaneous}\label{sec:Misc}
%%%%%%%%%%%%%%%%%%%%%%%%%%%%%%%%%%%%%%%%%%%%%%%%%%%%%%%%%%%%%%%

%-------------------------------%
\paradot{Action permutation instead of policy condition}
%-------------------------------%
We can rename actions without changing the underlying problem:
Let $A:\A\to\tilde\A$ be a bijection, and define
$\tilde P(o'r'|h\tilde a):=P(o'r'|ha)$, where $\tilde a:=A(a)$.
Clearly, all results for $P$ also hold for $\tilde P$ if
$a$ is replaced by $\tilde a$ everywhere, in particular
$\tilde\Pi(h):=A(\Pi(h))$. In general, this is of little use.
Things become more interesting if we allow the bijection $A$ to be history-dependent,
which we can do since our results hold for any, even non-stationarity, $\tilde P$.
This allows us to devise an $A:\A\times\H\to\tilde\A$
such that $A(\Pi(h);h)=${\it constant} for the policy $\Pi$ of interest.
For example, for $\tilde\A:=\A$, this is achieved by a permutation
that swaps action $a=\Pi(h)$ with some arbitrary but fixed action $a^1\in\A$,
and leaves all other actions unchanged:
\beqn
  A(a;h) := \left\{
              \begin{array}{lcl}
                ~a^1 & \hbox{if} & \Pi(h)=a \\
             \Pi(h) & \hbox{if} & \Pi(h)\neq a=a^1 \\
                ~a   & \hbox{else}
              \end{array}
            \right.
\eeqn
Since $\tilde\Pi(h)\equiv A(\Pi(h);h)\equiv a^1$ is constant, the
$\phi$-uniformity condition for $\tilde\Pi$ in
Theorems~\ref{thm:aphiQpi},~\ref{thm:aphiVpi}~and~\ref{thm:aphiVstar}
becomes vacuous.
While this transformation is theoretical interest, it only becomes
practically useful if we can somehow learn the function $A$ without
knowledge of $\Pi$, and in particular for $\Pi^*$. We could also
allow non-bijective $A$ that merge actions that have
(approximately) the same (optimal) $Q$-value.

%%%%%%%%%%%%%%%%%%%%%%%%%%%%%%%%%%%%%%%%%%%%%%%%%%%%%%%%%%%%%%%
\section{Discussion}\label{sec:Disc}
%%%%%%%%%%%%%%%%%%%%%%%%%%%%%%%%%%%%%%%%%%%%%%%%%%%%%%%%%%%%%%%

%-------------------------------%
\paradot{Summary}
%-------------------------------%
Our results show that RL algorithms for finite-state MDPs can be
utilized even for problems $P$ that have arbitrary history
dependence and history-to-state reductions/aggregations $\phi$ that
induce $P_\phi$ that are also neither stationary nor MDP. The only
condition to be placed on the reduction is that the quantities of
interest, (Q-)Values and (optimal) Policies, can approximately be
represented. This considerably generalizes previous work on feature
reinforcement learning and MDP state aggregation and allows for
extreme state aggregations beyond MDPs. The obtained results may
also explain why RL algorithms designed for MDPs sometimes perform
well beyond MDPs.

%-------------------------------%
\paradot{Outlook}
%-------------------------------%
As usual, lots remains to be done. A list of the
more interesting remaining tasks and open questions follows:

% Theorem~\ref{thm:aphiVstar}ii for eps>0
\blob While the approximate $\phi$-uniformity condition on $Q^*$ in
Theorem~\ref{thm:aphiQstar} is very weak compared to
bisimilarity, uniformity of $V^*$ in Theorem~\ref{thm:aphiVstar} is even weaker
(Theorem~\ref{thm:Exphi} shows how much of a difference this can make).
It is an Open~Problem~\ref{open:aphiVstar} whether an analogue
of Theorem~\ref{thm:aphiQstar}ii also holds for Theorem~\ref{thm:aphiVstar} beyond
$\eps=0$.

% Compete FRL Algorithm with Guarantees
\blob An algorithm learning $\phi$ beyond MDPs that comes with regret or
PAC guarantees has yet to be developed. This could be done
by generalizing the partial order $\prec_\times$ to $n<\infty$, or
by adapting the class and proofs of BLB algorithms, or by integrating MERL with UCRL$\g$.
% Tightness of bounds
\blob All bounds contain ${1\over 1-\g}$ to some power. Can the exponents
be improved? For which environments/examples are the bounds tight?

% Extreme aggregation. Dealing with actions
\blob The trick to use $a$-dependent $Q^*$ as $a$-independent map
$\phi$ in Section~\ref{sec:ExSAgg} was to vectorize $Q^*$ in $a$.
Unfortunately this leads to a state-space size exponential in $\A$.
Solution $\phi$ based on $(V^*,\Pi^*)$ pair is only linear in $\A$,
but rests on Open~Problem~\ref{open:aphiVstar}. Are there other/better ways of
dealing with actions? Other extreme aggregations $\phi$, or are
$a$-dependent $\phi$ possible?

% Average reward total regret versions
\blob Are average-reward total-regret versions of our discounted reward
results possible, under suitable mixing rate conditions?

% Approximately phi-uniform policies
\blob For small discrete action spaces typical for many board
games, the exact conditions on $\Pi$ are met. For continuous
action spaces as in robotics, we can simply discretize the action
space, introducing another $\eps$-error, but action-continuous
versions of our results would be nicer. Except for
Theorem~\ref{thm:aphiQstar}, any interesting generalization should
replace the exact by approximate $\phi$-uniformity conditions on
$\Pi$.

% Extend existing RK convergence theorems beyond MDPs
\blob Our theorems and/or proof ideas should allow to extend
existing convergence theorems for RL algorithms such as Q-learning
and others from MDPs to beyond MDPs.

% Conditions on P that imply $phi$-uniformity of (Q-)Values.
\blob The bisimulation conditions of classical state aggregation results
are for reward and transition probabilities.
It would be interesting to derive explicit weaker conditions for them
that still imply our conditions on (Q-)Values.

%%%%%%%%%%%%%%%%%%%%%%%%%%%%%%%%%%%%%%%%%%%%%%%%%%%%%%%%%%%%%%%
%%                 B i b l i o g r a p h y                   %%
%%%%%%%%%%%%%%%%%%%%%%%%%%%%%%%%%%%%%%%%%%%%%%%%%%%%%%%%%%%%%%%

%%%%%%%%%%%%%%%%%%%%%%%%%%%%%%%%%%%%%%%%%%%%%%%%%%%%%%%%%%%%%%%
\section*{References}\label{sec:Bib}
%%%%%%%%%%%%%%%%%%%%%%%%%%%%%%%%%%%%%%%%%%%%%%%%%%%%%%%%%%%%%%%
\addcontentsline{toc}{section}{\refname}

\def\refname{\vspace{-4ex}}
\bibliographystyle{alpha}

\begin{thebibliography}{ABCD}\parskip=-1ex

\bibitem[EGW05]{Ernst:05}
D.~Ernst, P.~Geurts, and L.~Wehenkel.
\newblock Tree-based batch mode reinforcement learning.
\newblock {\em Journal of Machine Learning Research}, 6:503--556, 2005.

\bibitem[Faz06]{Fazekas:06}
I.~Fazekas.
\newblock On a general approach to the strong laws of large numbers.
\newblock Technical report, Faculty of Informatics, University of Debrecen,
  Hungary, 2006.

\bibitem[FK01]{Fazekas:01}
I.~Fazekas and O.~Klesov.
\newblock A general approach to the strong law of large numbers.
\newblock {\em Theory of Probability \& Its Applications}, 45(3):436--449,
  2001.

\bibitem[FPP04]{Ferns:04}
N.~Ferns, P.~Panangaden, and D.~Precup.
\newblock Metrics for finite {M}arkov decision processes.
\newblock In {\em Proc. 20th conf. on Uncertainty in Artificial Intelligence
  ({UAI'04})}, pages 162--169, 2004.

\bibitem[GDG03]{Givan:03}
R.~Givan, T.~Dean, and M.~Greig.
\newblock Equivalence notions and model minimization in {M}arkov decision
  processes.
\newblock {\em Artificial Intelligence}, 147(1--2):163--223, 2003.

\bibitem[Hut05]{Hutter:04uaibook}
M.~Hutter.
\newblock {\em Universal Artificial Intelligence: Sequential Decisions based on
  Algorithmic Probability}.
\newblock Springer, Berlin, 2005.

\bibitem[Hut09a]{Hutter:09phidbn}
M.~Hutter.
\newblock Feature dynamic {B}ayesian networks.
\newblock In {\em Proc. 2nd Conf. on Artificial General Intelligence
  ({AGI'09})}, volume~8, pages 67--73. Atlantis Press, 2009.

\bibitem[Hut09b]{Hutter:09phimdp}
M.~Hutter.
\newblock Feature {M}arkov decision processes.
\newblock In {\em Proc. 2nd Conf. on Artificial General Intelligence
  ({AGI'09})}, volume~8, pages 61--66. Atlantis Press, 2009.

\bibitem[Hut09c]{Hutter:09phimdpx}
M.~Hutter.
\newblock Feature reinforcement learning: Part {I}: Unstructured {MDP}s.
\newblock {\em Journal of Artificial General Intelligence}, 1:3--24, 2009.

\bibitem[JOA10]{Jaksch:10}
T.~Jaksch, R.~Ortner, and P.~Auer.
\newblock Near-optimal regret bounds for reinforcement learning.
\newblock {\em Journal of Machine Learning Research}, 11:1563--1600, 2010.

\bibitem[LH12]{Hutter:12pacmdp}
T.~Lattimore and M.~Hutter.
\newblock {PAC} bounds for discounted {MDP}s.
\newblock In {\em Proc. 23rd International Conf. on Algorithmic Learning Theory
  ({ALT'12})}, volume 7568 of {\em LNAI}, pages 320--334, Lyon, France, 2012.
  Springer.

\bibitem[LH14]{Hutter:14tcdiscx}
T.~Lattimote and M.~Hutter.
\newblock General time consistent discounting.
\newblock {\em Theoretical Computer Science}, 519:140--154, 2014.

\bibitem[LHS13]{Hutter:13pacgrl}
T.~Lattimore, M.~Hutter, and P.~Sunehag.
\newblock The sample-complexity of general reinforcement learning.
\newblock {\em Journal of Machine Learning Research, W\&CP: ICML},
  28(3):28--36, 2013.

\bibitem[McC96]{McCallum:96}
A.~K. McCallum.
\newblock {\em Reinforcement Learning with Selective Perception and Hidden
  State}.
\newblock PhD thesis, Department of Computer Science, University of Rochester,
  1996.

\bibitem[MMR11]{Maillard:11}
O.-A. Maillard, R.~Munos, and D.~Ryabko.
\newblock Selecting the state-representation in reinforcement learning.
\newblock In {\em Advances in Neural Information Processing Systems
  ({NIPS'11})}, volume~24, pages 2627--2635, 2011.

\bibitem[Ngu13]{Nguyen:13phd}
P.~Nguyen.
\newblock {\em Feature Reinforcement Learning Agents}.
\newblock PhD thesis, Research School of Computer Science, Australian National
  University, 2013.

\bibitem[NMRO13]{Nguyen:13aistats}
P.~Nguyen, O.~Maillard, D.~Ryabko, and R.~Ortner.
\newblock Competing with an infinite set of models in reinforcement learning.
\newblock {\em JMLR WS\&CP AISTATS}, 31:463--471, 2013.

\bibitem[NOR13]{Maillard:13}
O.-A. Maillard~P. Nguyen, R.~Ortner, and D.~Ryabko.
\newblock Optimal regret bounds for selecting the state representation in
  reinforcement learning.
\newblock {\em JMLR W\&CP ICML}, 28(1):543--551, 2013.

\bibitem[NSH11]{Hutter:11frlexp}
P.~Nguyen, P.~Sunehag, and M.~Hutter.
\newblock Feature reinforcement learning in practice.
\newblock In {\em Proc. 9th European Workshop on Reinforcement Learning
  ({EWRL-9})}, volume 7188 of {\em LNAI}, pages 66--77. Springer, September
  2011.

\bibitem[Ort07]{Ortner:07}
R.~Ortner.
\newblock Pseudometrics for state aggregation in average reward {M}arkov
  decision processes.
\newblock In {\em Proc. 18th International Conf. on Algorithmic Learning Theory
  ({ALT'07})}, volume 4754 of {\em LNAI}, pages 373--387, Sendai, Japan, 2007.

\bibitem[Put94]{Puterman:94}
M.~L. Puterman.
\newblock {\em Markov Decision Processes --- Discrete Stochastic Dynamic
  Programming}.
\newblock Wiley, New York, NY, 1994.

\bibitem[RN10]{Russell:10}
S.~J. Russell and P.~Norvig.
\newblock {\em Artificial Intelligence. {A} Modern Approach}.
\newblock Prentice-Hall, Englewood Cliffs, NJ, 3rd edition, 2010.

\bibitem[SB98]{Sutton:98}
R.~S. Sutton and A.~G. Barto.
\newblock {\em Reinforcement Learning: An Introduction}.
\newblock MIT Press, Cambridge, MA, 1998.

\bibitem[SH10]{Hutter:10phimp}
P.~Sunehag and M.~Hutter.
\newblock Consistency of feature {M}arkov processes.
\newblock In {\em Proc. 21st International Conf. on Algorithmic Learning Theory
  ({ALT'10})}, volume 6331 of {\em LNAI}, pages 360--374, Canberra, Australia,
  2010. Springer.

\bibitem[SLL09]{Strehl:09}
A.~L. Strehl, L.~Li, and Michael~L. Littman.
\newblock Reinforcement learning in finite {MDP}s: {PAC} analysis.
\newblock {\em Journal of Machine Learning Research}, 10:2413--2444, 2009.

\bibitem[UV98]{Uther:98}
W.~T.~B. Uther and M.~M. Veloso.
\newblock Tree based discretization for continuous state space reinforcement
  learning.
\newblock In {\em AAAI}, pages 769--774, 1998.

\bibitem[VGS05]{Vovk:05}
V.~Vovk, A.~Gammerman, and G.~Shafer.
\newblock {\em Algorithmic Learning in a Random World}.
\newblock Springer, New York, 2005.

\end{thebibliography}
\begin{small}

\end{small}

\appendix \newpage
%%%%%%%%%%%%%%%%%%%%%%%%%%%%%%%%%%%%%%%%%%%%%%%%%%%%%%%%%%%%%%%
\section{List of Notation}\label{app:Notation}
%%%%%%%%%%%%%%%%%%%%%%%%%%%%%%%%%%%%%%%%%%%%%%%%%%%%%%%%%%%%%%%

\begin{tabbing}
  \hspace{0.13\textwidth} \= \hspace{0.73\textwidth} \= \kill
  {\bf General notation} \>                                                                  \\[0.5ex]
  $[\![R]\!]$            \> = 1 if $R$=true and =0 if $R$=false (Iverson bracket)                \\[0.5ex]
  $\#\cal X$         \> size of set $\cal X$                                                 \\[0.5ex]
  $\eps,\d$          \> small non-negative real numbers                                               \\[0.5ex]
  $\lfloor z\rfloor$ \> largest integer $\leq z$                                          \\[2ex]

  {\bf Original history-based process} \>                                                    \\[0.5ex]
  $\O,\R,\A$         \> = finite observation, reward, action spaces.                         \\[0.5ex]
  $o_t r_t a_t$      \> $\in\;\O\times\R\times\A$ = observation, reward, action at time $t$    \\[0.5ex]
  $t\leq n\in\SetN$  \> = any time $\leq$ sample size                                              \\[0.5ex]
  $P,Q,V,\Pi$        \> = Probability, (Q-)Value, Policy of original history-based Process    \\[0.5ex]
  $\Pi^*,\t\Pi$,$\Pi_B$ \> = optimal, approximately optimal, behavior Policy                 \\[0.5ex]
  $h\in\H$           \> = $(\O\times\R\times\A)^*\times\O\times\R$ = possible histories of any length\\[0.5ex]
  $h'=hao'r'$        \> = successor history of $h\in\H$                                      \\[0.5ex]
  $h_t$              \> = $o_1 r_1 a_1...o_t r_t$ = history up to time $t$                      \\[0.5ex]
  $\H_t$             \> = $(\O\times\R\times\A)^{t-1}\times\O\times\R$ = history of length $t$\\[0.5ex]
  $P(o'r'|ha)$       \> = probability of next observation\&reward given history\&action      \\[2ex]

  {\bf Reduction/aggregation from history to states} \>                                      \\[0.5ex]
  $\S_\phi$          \> = finite state space induced by $\phi$ (range of $\phi$)             \\[0.5ex]
  $\phi:\H\to\S_\phi$\> = reduction/map/aggregation from histories to states             \\[0.5ex]
  $s_t$              \> = $\phi(h_t)\in\S$ = state at time $t$                               \\[0.5ex]
  $P_\phi(s'r'|ha)$  \> = marginalized $P$-probability over state\&reward given history\&action\\[0.5ex]
  $B(h|sa)$          \> = dispersion probability. Stochastic ``inverse'' of $\phi$                 \\[0.5ex]
  $\langle Q(h,a)\rangle_B$ \> = $B$-average over $\{\t h:\phi(\t h)=\phi(h)\}$              \\[0.5ex]
  $w_t(sa)$          \> = non-negative weight function $\sum_{t=1}^\infty w_t(sa)=1~\forall sa$                \\[0.5ex]
  $P_B(h)$           \> = probability of $h$ from $P$ interacting with $\Pi_B$               \\[0.5ex]
  $P_{\phi B}$()     \> = (partially) $\phi$-reduced, marginalized, conditionalized $P_B$    \\[0.5ex]
  $\prec,\prec_\times$ \> = (extended) ordering of $\phi$ w.r.t.\ quality ($n=\infty$ so far only) \\[2ex]

  {\bf Finite state Markov decision process (MDP)} \>                                        \\[0.5ex]
  $\S$               \> = finite state space                                                 \\[0.5ex]
  $p,q,v,\pi$        \> = probability, (q-)value, policy of MDP                               \\[0.5ex]
  $s,a,s',r'$        \> = stat, action, successor state, reward                              \\[0.5ex]
  $n(sas'r')$        \> = number of times $sas'r'$ appears in $h_{n+1}$                          \\[0.5ex]
  $\g\in[0;1)$       \> = discount factor
\end{tabbing}

\end{document}